\documentclass[letterpaper]{article} 
\usepackage{aaai2026}  
\usepackage{times}  
\usepackage{helvet}  
\usepackage{courier}  
\usepackage[hyphens]{url}  
\usepackage{graphicx} 
\usepackage{natbib}  
\usepackage{caption} 
\usepackage{algorithm}
\usepackage{algorithmic}

\usepackage{newfloat}
\usepackage{listings}
\DeclareCaptionStyle{ruled}{labelfont=normalfont,labelsep=colon,strut=off} 
\floatstyle{ruled}
\newfloat{listing}{tb}{lst}{}
\floatname{listing}{Listing}
\usepackage{amsfonts}
\usepackage{amsmath}
\usepackage{amsthm}
\usepackage{mathtools}
\usepackage{cleveref}
\usepackage{subfigure}
\usepackage{enumitem}
\usepackage{booktabs}

\theoremstyle{plain}
\newtheorem{theorem}{Theorem}
\newtheorem{lemma}[theorem]{Lemma}
\newtheorem{proposition}[theorem]{Proposition}
\newtheorem{corollary}[theorem]{Corollary}
\theoremstyle{definition}
\newtheorem{assumption}{Assumption}

\DeclareMathOperator{\argmin}{arg\,min}

\title{Align When They Want, Complement When They Need!\\ Human-Centered Ensembles for Adaptive Human-AI Collaboration}
\author{
    Hasan Amin,
    Ming Yin,
    Rajiv Khanna
}
\affiliations{
    Purdue University, USA\\
    \{hasanamin, mingyin, rajivak\}@purdue.edu
}

\begin{document}

\maketitle

\begin{abstract}
In human-AI decision making, designing AI that complements human expertise has been a natural strategy to enhance human-AI collaboration, yet it often comes at the cost of decreased AI performance in areas of human strengths. This can inadvertently erode human trust and cause them to ignore AI advice precisely when it is most needed.
Conversely, an aligned AI fosters trust yet risks reinforcing suboptimal human behavior and lowering human-AI team performance.
In this paper, we start by identifying this fundamental tension between performance-boosting (i.e., {\em complementarity}) and trust-building (i.e., {\em alignment}) as an inherent limitation of the traditional approach for training a single AI model to assist human decision making.
To overcome this, we introduce a novel,
{\em human-centered adaptive AI ensemble} that strategically toggles between two specialist AI models---the aligned model and the complementary model---based on contextual cues, using an elegantly simple yet provably near-optimal {\em Rational Routing Shortcut} mechanism.
Comprehensive theoretical analyses elucidate
why the adaptive AI ensemble is effective and when it yields maximum benefits.
Moreover, experiments on both simulated and real-world data show that when humans are assisted by the adaptive AI ensemble in decision making, they can achieve significantly higher performance than when they are assisted by single AI models that are trained to either optimize for their independent performance or even the human-AI team performance.
\end{abstract}

\begin{links}
    \link{Supplementary material}{https://github.com/shasanamin/aaai26-adaptive-ai}
\end{links}

\begin{figure}[th]
    \centering
    \includegraphics[width=\linewidth]{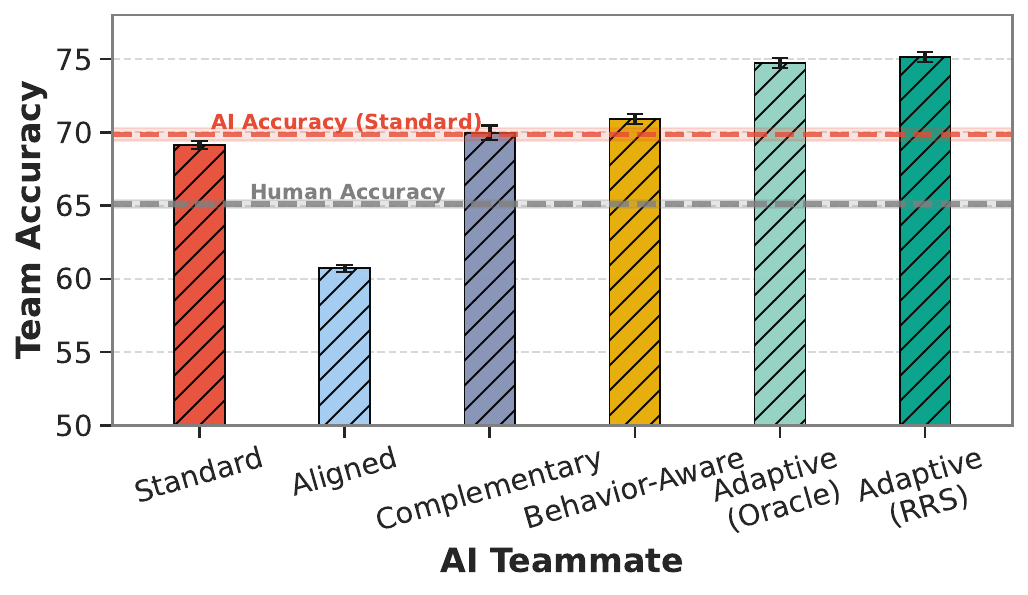}
    \caption{Summary comparison of human-AI team performance across AI design paradigms. Standard AI optimizes independent accuracy; aligned and complementary AIs specialize in trust-building and error-correction, respectively; behavior-aware AI optimizes team loss under our human interaction model.
    Our proposed adaptive AI ensemble, including the more practical one using Rational Routing Shortcut (RRS), strategically toggles between aligned and complementary AI, and achieves the highest team accuracy. Results shown here correspond to the behavior-grounded evaluation on \textsf{WoofNette} data.
    }
    \label{fig:HAI-performance}
\end{figure}

\section{Introduction}
AI systems are increasingly being developed to support human decision making in various domains, yet the grand promise of effective human-AI collaboration remains largely unfulfilled. Contrary to intuition, human-AI teams routinely underperform compared to their best constituent, i.e., the human or AI alone
\cite{dell2023super,agarwal2023combining,vaccaro2024combinations,goh2024large}. This recurring failure signals a fundamental gap in our current understanding and raises a deeper question at the core of collaborative AI design: \textit{what makes an AI a truly effective teammate?}

A dominant paradigm in human-AI collaboration emphasizes \textit{complementarity}: optimizing AI to be correct on instances where human judgment is weak \cite{madras2018predict,lai2022human,steyvers2022bayesian,zhang2022you,holstein2021designing}. While this approach can elevate team performance in principle, it often neglects a crucial psychological and behavioral reality: human trust in AI is not merely a function of AI's objective accuracy, but hinges disproportionately on AI's alignment with human judgment, especially on instances where humans feel more confident \cite{lu2021human,chong2022human}. When such complementary AI diverges from human intuition on instances where humans are confident, even if wrongly so, it can degrade trust and reliance, resulting in users paradoxically ignoring AI advice precisely when it would have been most valuable~\cite{lu2021human,lu2024does}.
Conversely, an AI that focuses on \textit{alignment} with human judgment may foster trust but also reinforce suboptimal human decisions, squandering the performance gains that complementary AI can deliver \cite{grgic2022taking, askell2021general, ouyang2022training, guo2024controllable}.
Recognizing the importance of human-AI team performance, some recent work has attempted to optimize for team decision making directly, taking human-AI interaction behavior into account \cite{bansal2021most,mahmood2024designing}. Such behavior-aware AI has been a promising proof-of-concept, already leading to some gains in team performance compared to human or AI alone, despite strong assumptions like rationality \cite{bansal2021most} or confidence thresholding \cite{mahmood2024designing}.

In this work, we start by extending the behavior-aware AI framework to incorporate more realistic human behavior, capturing the dual need to foster trust through alignment and boost performance through complementarity.
We introduce a behaviorally grounded \textit{Confidence-Gated Probabilistic Reliance (CGPR)} model that
formally links human self-confidence and trust to the likelihood of accepting AI recommendations.
Building on this, we derive the optimal single-model, behavior-aware AI that directly optimizes team decision making performance.
In doing so, we uncover an intriguing phenomenon: the ``\textit{complementarity-alignment tradeoff}.''
That is, even the best single-model, behavior-aware AI cannot simultaneously optimize for both trust (through alignment) and team performance (through complementarity) across all regions of the decision space.

In light of these fundamental limitations of the single-model paradigms,
we supplement the behavior-aware AI with a new direction: \emph{human-centered adaptive AI ensembles} that dynamically toggle between alignment and complementarity---preserving trust where it matters most, and providing support where it helps most.
In particular, we propose a practicable instantiation of the adaptive AI framework that learns separate  specialist AI models for alignment and complementarity, and then dynamically selects between them at inference time using \textit{Rational Routing Shortcut (RRS)}, a simple and theoretically principled mechanism that approximates human-aware, optimal routing without requiring access to private or hard-to-measure human states.
We further provide rigorous theoretical analyses quantifying when and why adaptive AI yields substantial team performance gains over any single-model alternative.
Finally, we showcase the generalizability and robustness of our theoretical insights through simulations and a behavior-grounded image decision making benchmark constructed from real human accuracy and confidence data.
Notably, our adaptive AI ensemble achieves remarkable empirical improvements in enhancing human-AI team accuracy in decision making in practice: up to $9\%$ increase in team accuracy over standard AI optimized solely for independent accuracy, and $6\%$ over behavior-aware AI optimized explicitly for team performance (see Figure~\ref{fig:HAI-performance})---remarkably, while relying on specialized AI components individually less accurate than the standard AI model.

While empirical work on human-AI collaboration is rapidly growing, a formal understanding of the interplay between team performance, complementarity and alignment is severely lacking.
Our work provides one of the first foundational frameworks to characterize and address this gap, and highlights a shift in human-AI collaboration: moving from static, single-model approaches to \emph{adaptive, behaviorally grounded, and theoretically grounded} systems that aim to ``align when desired, and complement when needed.''
Our main contributions are:
\begin{enumerate}
    \item \textbf{Principled human behavior modeling and complementarity-alignment tradeoff characterization}: We introduce CGPR as a realistic, behaviorally-grounded model of human-AI interaction in AI-assisted decision making.
    Through this, we provide the first rigorous characterization of a fundamental complementarity-alignment tradeoff, proving why single-model approaches are inherently limited. Our models and analyses incorporate important behavioral factors, such as confidence and trust, into the core of the machine learning, rather than treating them as secondary considerations or empirical observations.

    \item \textbf{Practical adaptive AI ensemble framework}: We propose a human-centered adaptive AI ensemble to overcome tradeoffs like above. We offer a practical instantiation that trains specialist models for alignment and complementarity, and then efficiently toggles between them at test time through RRS, without requiring access to private or hard-to-measure human data.
    The benefits of our framework are validated through comprehensive experimentation.
    Our approach is flexible and extensible to a broad range of models and applications.

    \item \textbf{Extensive theoretical analyses and insights}: We provide extensive theoretical guarantees that bring a new level of rigor to human-centered machine learning. This includes precisely bounding the complementarity-alignment tradeoff (Theorem \ref{thm:local-tradeoff-formal}) as well as the performance gains of adaptive ensemble over optimal single model (Theorem \ref{thm:adaptive-gap-strong-convexity}), offering fresh  perspective on when and why adaptive AI is most useful in practice.
    These results establish a new class of human-centered theoretical tools.
\end{enumerate}

\paragraph{Other related work.}
Distinct from mainstream research on human-AI collaboration—which largely optimizes for complementarity or alignment, often overlooking nuanced behavioral dynamics—our work builds on and advances ``behavior-aware'' frameworks~\citep{bansal2021most,mahmood2024designing} by introducing more realistic cognitive modeling and rigorous theory. Parallel efforts on ``learning to defer"~\citep{madras2018predict,wilder2021learning,bondi2022role,dvijotham2023enhancing} investigate the \textit{division of labor} between humans and AI, typically assuming the AI can act as the final decision maker; in contrast, we focus on \textit{AI-assisted decision making} where the human always retains control and AI provides recommendations. Our approach uniquely formalizes the interplay between trust, confidence, and team performance, moving beyond simplified reliance models and establishing theoretically grounded methods for adaptive, human-centered teaming.

\section{The Complementarity-Alignment Tradeoff in Human-AI Collaboration}
\label{sec:tradeoff}

\subsection{Problem Setup}
In human-AI joint decision making, each instance of a decision making case is represented by features $\mathbf{x} \in \mathcal{X}$, and the objective is to predict an outcome $y \in \mathcal{Y}$. In our \textit{AI-assisted decision making} setup, an AI model $m$ provides a recommendation $y_m = m(\mathbf{x}; \theta_m)$ to a human decision maker (DM) $h$ who has an independent judgment $y_h = h(\mathbf{x}; \theta_h)$.
The final decision $d \in \mathcal{Y}$ is made by the human DM, based on their own initial judgment $y_h$ and the AI's recommendation $y_m$.

Typically, AI models are trained independently to minimize empirical risk over a dataset $\mathcal{D} = \{(\mathbf{x}_i, y_i)\}_{i=1}^N$, by optimizing
\(
    \theta_m = \argmin_{\theta_m'} \frac{1}{N} \sum_{i=1}^N \ell \left(m(\mathbf{x}_i; \theta_m'), y_i\right)
\),
where $\ell(\cdot)$ is an appropriate loss function. However, by optimizing for AI's independent accuracy instead of \textit{team performance}, this approach fails to account for how the human integrates the AI's recommendations into their final decision.
To model this human-AI interaction, we define a team decision making model $f(\cdot)$, which depends on the data instance, the AI recommendation, and the human DM's own judgment. The team loss optimization
is then formulated as:
\begin{equation*}
    \theta_m = \argmin_{\theta_m'} \frac{1}{N} \sum_{i=1}^N \ell \left(f(\mathbf{x}_i, m(\mathbf{x}_i; \theta_m'), h(\mathbf{x}_i; \theta_h)), y_i\right)
    \label{eq:team_loss}
\end{equation*}

\subsection{Modeling Human Behavior}
Optimizing human–AI team performance critically depends on accurately modeling how human DMs factor AI recommendations into their decisions. Prior work introduced a deterministic, confidence-thresholded reliance rule~\cite{mahmood2024designing}, where a human with confidence $\mathcal{C}^h(\mathbf{x})$ accepts the AI's prediction $y_m$ only when their own confidence falls below a threshold $\tau$.
Formally, this \emph{Confidence-Gated Reliance (CGR)} model takes the form:
\begin{equation}
\label{eq:CGR}
f_{\text{CGR}}(\mathbf{x}, m, h) =
\begin{cases}
y_h & \text{if } \mathcal{C}^h(\mathbf{x}) > \tau,\\
y_m & \text{otherwise}.
\end{cases}
\end{equation}
While analytically convenient, accumulating empirical evidence shows that human reliance on AI advice is often more nuanced and probabilistic, influenced significantly by human trust and observed AI behavior~\cite{chong2022human,schemmer2023appropriate,lu2021human,chiang2021you,narayanan2023does}.

To address this gap, we propose the \textit{Confidence-Gated Probabilistic Reliance (CGPR)} model. Let $\mathcal{D}_a = \{\mathbf{x} : \mathcal{C}^h(\mathbf{x}) > \tau\}$ and $\mathcal{D}_c = \{\mathbf{x} : \mathcal{C}^h(\mathbf{x}) \le \tau\}$
denote the ``alignment'' (high-confidence) and ``complementarity'' (low-confidence) regions, respectively. In $\mathcal{D}_a$, the human follows their own judgment, as in CGR. In $\mathcal{D}_c$, however, the DM relies on the AI prediction $y_m$ \textit{probabilistically}.
Recent evidence suggests that trust is disproportionately influenced by how often the AI contradicts the human when the human feels confident~\cite{lu2021human,grgic2022taking,wang2022will}.
We thus model the reliance probability $r$ to be governed by the perceived alignment between the human and the AI in the high-confidence region: $r = 1 - L_h(\mathcal{D}_a, m)$, with $L_h(\mathcal{D}_a, m) = \mathbb{E}_{\mathbf{x} \in \mathcal{D}_a}[\ell(m(\mathbf{x}), h(\mathbf{x}))]$ quantifying disagreement between the AI and the human.
We take $\ell(\cdot,\cdot)$ to be 0-1 loss here, so that $r \in [0,1]$ represents a probability.
Putting these components together, the CGPR team decision rule is:
\begin{equation}
\label{eq:team_model}
f(\mathbf{x}, m, h) =
\begin{cases}
y_h & \text{if } \mathcal{C}^h(\mathbf{x}) > \tau,\\[2pt]
y_m & \text{with prob. } r\;\text{if } \mathcal{C}^h(\mathbf{x}) \leq \tau,\\[2pt]
y_h & \text{otherwise.}
\end{cases}
\end{equation}

Here, we interpret trust as the human's willingness to follow the AI's recommendation, i.e., the reliance probability $r$ in CGPR, and thus use ``trust'' and ``reliance'' interchangeably. The CGPR model uniquely integrates human confidence, probabilistic reliance, and alignment-driven trust into a coherent and optimizable framework, significantly extending prior approaches and providing a principled foundation for (adaptive) human-centered AI.

\subsection{Optimizing Behavior-Aware AI}
Building on the behavior-aware AI framework,
we derive the optimal single-model AI model that directly optimizes team performance, accounting for the probabilistic and confidence-dependent nature of human reliance. Using Eq. \ref{eq:team_model}, the team loss can be decomposed as:
\begin{align}
\label{eq:team_loss}
L_{\text{team}}(\mathcal{D}, m,h)
=& \ L(\mathcal{D}_a, h)
   + L(\mathcal{D}_c, m) + \notag \\
&\;
   \big[ L(\mathcal{D}_c, h) - L(\mathcal{D}_c, m) \big] L_h(\mathcal{D}_a, m)
\end{align}

Since we cannot control a \textit{given} human DM $h$, we consider the human-only components $L_{a,h} \coloneqq L(\mathcal{D}_a, h)$ and $L_{c,h} \coloneqq L(\mathcal{D}_c, h)$ to be constant for optimization purpose. Therefore, the optimal, behavior-aware AI parameters are obtained by solving the following optimization problem:
\begin{equation}
    \label{eq:optimization_ba}
    \theta_{ba}^{*} = \argmin_{\theta_{ba}} L(\mathcal{D}_c, m) + \big[L_{c,h} - L(\mathcal{D}_c, m)\big] L_h(\mathcal{D}_a, m)
\end{equation}

\paragraph{Multiobjective optimization and the complementarity–alignment tradeoff.}
The behavior-aware optimization objective surfaces a tension between two desirable but conflicting goals: complementarity, achieved by minimizing $L(\mathcal{D}_c, m)$, and alignment, achieved by minimizing $L_h(\mathcal{D}_a, m)$.
Worse, their coupling in Eq. \ref{eq:optimization_ba} creates a compounding effect: as the model becomes better at complementing ($L(\mathcal{D}_c, m)$ decreases), the influence of alignment loss grows (via the term $[L_{c,h} - L(\mathcal{D}_c, m)]$), potentially increasing total team loss despite localized gains.
One could attempt to address this through standard multi-objective optimization, e.g., by minimizing a weighted combination $w \cdot L(\mathcal{D}_c, m) + (1-w) \cdot L_h(\mathcal{D}_a, m)$, thereby choosing a point on the Pareto frontier. However, this strategy merely shifts the compromise instead of resolving it.
The core issue is more fundamental: improving complementarity often entails diverging from human judgment patterns (in low-confidence regions), while improving alignment necessitates mimicking human behavior (in high-confidence regions).
These objectives could potentially be diametrically opposite.
This reveals a critical insight that optimizing a single model to balance both alignment and complementarity is not only inefficient but also often inherently suboptimal.

\subsection{Theoretical Analysis: One Model is Not Enough}
\label{sec:one_not_enough}
Next, we formalize the conflict between the two objectives---complementarity and alignment---as a direct tradeoff, demonstrating that under the single-model paradigm,  incremental improvements in alignment can lead to disproportionate degradation in complementarity, and vice versa.
For illustrative simplicity, our theoretical analysis is grounded in the geometry of the logistic loss with $\ell_2$-regularization, enabling fully data-dependent guarantees. Detailed discussion and proofs are deferred to Appendix.

\paragraph{Disentangling human and model factors.}
To effectively incorporate the contribution of human factors to the tradeoff, we first derive how model disagreement with human judgments in the alignment region is linked to the model's ground-truth prediction loss within the same region.

\begin{lemma}[Alignment Loss Sensitivity]
    \label{lem:alignment_sensitivity}
    The alignment loss $L_h(\mathcal{D}_a, m)$ can be decomposed as:
    \begin{equation*}
    \label{eq:alignment-loss-formula}
    L_h(\mathcal{D}_a,m)
    \;=\;
    \alpha \cdot L(\mathcal{D}_a,m)
    \;+\;
    (1-\alpha)\,\bigl[\,1 - L(\mathcal{D}_a,m)\bigr],
    \end{equation*}
    where $\alpha$ is human accuracy in the alignment region $\mathcal{D}_a$.
    Therefore, the sensitivity of alignment loss to changes in model's prediction loss can be quantified as:
    \begin{equation}
    \frac{\partial L_h(\mathcal{D}_a,m)}{\partial L(\mathcal{D}_a,m)}
    \;=\;
    2\alpha - 1.
    \end{equation}
\end{lemma}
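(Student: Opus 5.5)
The plan is to work pointwise on the alignment region $\mathcal{D}_a$ and then average. I take the setting the decomposition implicitly requires: binary labels $\mathcal{Y}=\{0,1\}$ and 0-1 loss throughout, so that $L(\mathcal{D}_a,m)=\Pr_{\mathbf{x}\in\mathcal{D}_a}[m(\mathbf{x})\neq y]$, $\alpha=\Pr_{\mathbf{x}\in\mathcal{D}_a}[h(\mathbf{x})=y]$, and $L_h(\mathcal{D}_a,m)=\Pr_{\mathbf{x}\in\mathcal{D}_a}[m(\mathbf{x})\neq h(\mathbf{x})]$. The key observation is that with two classes, the AI and the human disagree on $\mathbf{x}$ \emph{exactly} when one of them is correct and the other is wrong. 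If both are correct, both output $y$. If both are wrong, both output the unique other label. So the disagreement event splits into two disjoint events: $\{h=y,\ m\neq y\}$ and $\{h\neq y,\ m=y\}$.

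Taking probabilities over $\mathcal{D}_a$ gives
\[
L_h(\mathcal{D}_a,m)=\Pr[h=y,\ m\neq y]+\Pr[h\neq y,\ m=y].
\]
To reach the stated form I will factor each joint probability into a product. This needs the assumption, implicit in the statement, that the human's correctness and the model's correctness are independent on $\mathcal{D}_a$; equivalently, the model's error rate on $\mathcal{D}_a$ is the same whether or not the human is correct. I will state this explicitly. Under it, the first term is $\alpha L(\mathcal{D}_a,m)$ and the second is $(1-\alpha)[1-L(\mathcal{D}_a,m)]$, which is the claimed decomposition.

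For the sensitivity, I rewrite the expression as $L_h=(1-\alpha)+(2\alpha-1)L(\mathcal{D}_a,m)$. This is affine in $L(\mathcal{D}_a,m)$ with $\alpha$ held fixed, since $\alpha$ depends only on the given human and the fixed region $\mathcal{D}_a$. Differentiating gives $2\alpha-1$.

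The main obstacle is the independence step. Without it, the exact identity is
\[
L_h=\alpha+(1-L(\mathcal{D}_a,m))-2\Pr[h=y,\ m=y].
\]
Here the derivative depends on how the joint correctness term moves as $m$ changes. I would first try to justify independence as a modeling assumption. If that is not acceptable, I would interpret $\alpha$ as the human's accuracy conditional on the model's correctness pattern, so that the two terms factor exactly as above. Either way, I would remark that $2\alpha-1>0$ whenever the human is better than chance on $\mathcal{D}_a$, so alignment and accuracy then move together.
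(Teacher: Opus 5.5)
Your proof is correct and has the same skeleton as the paper's: binary labels; disagreement happens exactly when one of $h,m$ is right and the other wrong; split on whether the human is correct; factor each piece with an independence assumption; then read off $2\alpha-1$ from the affine form $L_h=(1-\alpha)+(2\alpha-1)L(\mathcal{D}_a,m)$ with $\alpha$ fixed.

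The real difference is which independence you assume. The paper assumes it conditionally on the true label, $\mathbb{P}[M(X)\neq y\mid Y_H=y,\,Y=y]=\mathbb{P}[M(X)\neq y\mid Y=y]$. It then marginalizes over $Y$ to get $\mathbb{P}[M(X)\neq Y\mid Y_H=Y]=\mathbb{P}[M(X)\neq Y]$. That marginalization tacitly requires the law of $Y$ given $Y_H=Y$ to match the marginal law of $Y$ (e.g.\ class-independent human accuracy). Without it the identity can fail. Take $Y$ uniform, a model that always outputs $1$, and a human who is always right on class $1$ but right only half the time on class $0$. The class-conditional assumption holds, yet $L_h=1/4$ while the formula gives $1/2$. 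Your assumption, independence of the two correctness events on $\mathcal{D}_a$, is exactly what the factorization uses. So your route actually delivers the identity, and states its hypothesis more accurately than the paper does.

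The fallback in your last paragraph adds nothing, however. Asking a single $\alpha$ to satisfy both $\mathbb{P}[h=y,\,m\neq y]=\alpha L(\mathcal{D}_a,m)$ and $\mathbb{P}[h\neq y,\,m=y]=(1-\alpha)\bigl[1-L(\mathcal{D}_a,m)\bigr]$ forces $\mathbb{P}[h=y\mid m\neq y]=\mathbb{P}[h=y\mid m=y]$. That is the same independence assumption again, not an alternative to it.
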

\noindent This means attempting to improve model at mimicking human judgments effectively scales the required changes in the model's ability to predict ground truth by $(2\alpha-1)^{-1}$. For small $\alpha$, the alignment loss becomes largely insensitive to prediction loss, making it difficult to improve alignment by improving underlying ground-truth accuracy, and vice-versa.
Note that this decomposition is obtained under conditional independence assumption in binary classification setting, with a comprehensive analysis deferred to Appendix.

\paragraph{Quantifying the complementarity-alignment tradeoff.}
Let $L_a(\theta) \coloneqq L_h(\mathcal D_a,m(\cdot\,;\,\theta))$ and $L_c(\theta) \coloneqq L(\mathcal D_c,m(\cdot\,;\,\theta))$ be the alignment and complementarity objectives, each a logistic loss with $\ell_2$‐penalty.  Denote by $\theta^*_{m_a}$ and $\theta^*_{m_c}$ the corresponding minimizers, with $\theta$ denoting parameters for an arbitrary third model---including possibly $\theta_{ba}^*$ from Eq. \ref{eq:optimization_ba}---in the region of interest.
We consider a closed neighborhood $\mathcal R$ that contains all three points, and let $\lambda_{\max}^a$ and $\lambda_{\min}^c$ be the maximum and minimum eigenvalues of the Hessians of $L_a$ and $L_c$ over~$\mathcal R$.
Define the \textit{effective curvature ratio} $\lambda_r:=\lambda_{\min}^c/\lambda_{\max}^a$, which captures the ratio between how quickly alignment can improve and how slowly complementarity can degrade in the worst case. For notational convenience, we also define the human-accuracy factor as $\kappa \coloneqq 2\alpha - 1$, and quantify the distance from the specialist models to a given model as $d_c \coloneqq \|\theta - \theta^*_{m_c}\|$ and $d_a \coloneqq \|\theta - \theta^*_{m_a}\|$.

\begin{theorem}[Complementarity-Alignment Tradeoff]
\label{thm:local-tradeoff-formal}
For any $\theta\!\in\!\mathcal R$, define the local unit tradeoff
\begin{align}
  \mathcal{T}(\theta)
  &\;:=\;
  \lim_{\varepsilon\downarrow 0}
  \frac{L_c\!\bigl(\theta-\varepsilon\nabla L_a(\theta)\bigr)-L_c(\theta)}
       {-\,\bigl(L_a\!\bigl(\theta-\varepsilon\nabla L_a(\theta)\bigr)-L_a(\theta)\bigr)}
  \nonumber\\
  &\;=\;
  -\,\frac{\nabla L_c(\theta)^{\!\top}\nabla L_a(\theta)}{\|\nabla L_a(\theta)\|^2}, \nonumber
\end{align}
i.e.\ the instantaneous increase in complementarity loss per unit decrease in alignment loss when taking the steepest descent step for alignment.
For every $\theta\in\mathcal R$,
\begin{equation}
  \label{eq:tradeoff_bound_unit}
   \mathcal T(\theta)
   \;\;\ge\;\;
   \frac{\lambda_r}{\kappa}\;
   \frac{d_c}{d_a}
   \,\bigl(-\cos\!\phi(\theta)\bigr),
\end{equation}
where $\phi(\theta)$ is the angle between
$\nabla L_a(\theta)$ and $\nabla L_c(\theta)$.
Moreover, if the gradients remain sufficiently opposed in the descent direction in a neighborhood of $\theta^*_{m_a}$, i.e., $-\cos\phi(\theta) \ge c_0 > 0$ for some constant $c_0$, then $\lim_{\theta\to\theta^*_{m_a}} \mathcal T(\theta)=+\infty$ at rate $\Omega\left(\frac{1}{d_a}\right)$.
\end{theorem}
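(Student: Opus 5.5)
The plan is to start from the closed form of $\mathcal T(\theta)$ and factor it into a sign part and a magnitude part. Writing $\nabla L_c(\theta)^{\top}\nabla L_a(\theta)=\|\nabla L_c(\theta)\|\,\|\nabla L_a(\theta)\|\cos\phi(\theta)$ gives
\begin{equation*}
\mathcal T(\theta)=\bigl(-\cos\phi(\theta)\bigr)\,\frac{\|\nabla L_c(\theta)\|}{\|\nabla L_a(\theta)\|}.
\end{equation*}
The closed form itself comes from a first-order Taylor expansion of numerator and denominator in $\varepsilon$. Since both losses are $C^2$ (logistic plus $\ell_2$), the limit is the ratio of directional derivatives, $-\nabla L_c^{\top}\nabla L_a\big/\|\nabla L_a\|^2$. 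Once this is in hand, it suffices to lower bound $\|\nabla L_c(\theta)\|$ and upper bound $\|\nabla L_a(\theta)\|$ in terms of $d_c$ and $d_a$. I would assume $\mathcal R$ is convex, or pass to its convex hull, so that the segments between $\theta$ and each specialist lie in $\mathcal R$ and the Hessian eigenvalue bounds apply along them.

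\emph{Lower bound on the complementarity gradient.} Because $\theta^*_{m_c}$ minimizes $L_c$, we have $\nabla L_c(\theta^*_{m_c})=0$. Applying the integral mean value theorem along the segment from $\theta^*_{m_c}$ to $\theta$ gives $\nabla L_c(\theta)=\bar H_c(\theta-\theta^*_{m_c})$. Here $\bar H_c$ is the averaged Hessian, which satisfies $\bar H_c\succeq\lambda^c_{\min} I$ because the $\ell_2$ penalty makes it positive definite. Hence $\|\nabla L_c(\theta)\|\ge\lambda^c_{\min}d_c$.

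\emph{Upper bound on the alignment gradient.} Likewise $\nabla L_a(\theta)=\bar H_a(\theta-\theta^*_{m_a})$, so $\|\nabla L_a(\theta)\|\le\|\bar H_a\|\,d_a$. The factor $\kappa$ enters here through Lemma~\ref{lem:alignment_sensitivity}. Under the conditional-independence decomposition, $L_a=\kappa\,L(\mathcal D_a,m)+(1-\alpha)$ up to the regularizer. Its curvature is therefore $\kappa$ times that of the ground-truth prediction loss on $\mathcal D_a$, and I will read $\lambda^a_{\max}$ as the curvature scale of that loss. This yields $\|\nabla L_a(\theta)\|\le\kappa\lambda^a_{\max}d_a$.

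Combining the two bounds gives $\|\nabla L_c\|/\|\nabla L_a\|\ge\lambda_r d_c/(\kappa d_a)$. Multiplying by $-\cos\phi\ge0$ yields \eqref{eq:tradeoff_bound_unit}. When $-\cos\phi<0$ the right-hand side is negative, and the inequality does not follow from the ratio bound alone. There I would either restrict the claim to the opposed-gradient case, which is the regime of interest, or note that the bound is meant in that regime.

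\emph{Asymptotic claim.} Take $\theta\to\theta^*_{m_a}$ under the hypothesis $-\cos\phi\ge c_0$. Then $d_a\to0$, while $d_c\to\|\theta^*_{m_a}-\theta^*_{m_c}\|>0$, assuming the two specialists are distinct, which is implied by the gradients being opposed. So $\mathcal T(\theta)\ge c_0\lambda_r d_c/(\kappa d_a)=\Omega(1/d_a)\to+\infty$.

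The main obstacle I expect is the bookkeeping that makes $\kappa$ appear correctly. The Hessian of $L_a$ must be tied to the prediction loss via Lemma~\ref{lem:alignment_sensitivity} with a surrogate (logistic) rather than the 0--1 loss, so I would state that identification explicitly as a modeling assumption. I would also need $\kappa>0$, i.e.\ $\alpha>1/2$, for the direction of the inequality to be preserved.
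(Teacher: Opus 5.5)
Your proposal is correct and follows essentially the same route as the paper's proof. Both write $\mathcal T(\theta)=(-\cos\phi)\,\|\nabla L_c\|/\|\nabla L_a\|$, lower-bound $\|\nabla L_c(\theta)\|\ge\lambda^c_{\min}d_c$ by strong convexity and upper-bound $\|\nabla L_a(\theta)\|\le\lambda^a_{\max}d_a$ by smoothness, and get the $1/\kappa$ from Lemma~\ref{lem:alignment_sensitivity}. You do this by reading $\lambda^a_{\max}$ as the curvature of the prediction loss, which is how the paper's ``optional $\kappa$ factor'' remark must be understood. Your caveat that the bound only follows when $\cos\phi(\theta)\le 0$ is well taken: the paper's substitution step silently needs the same restriction, and the inequality can genuinely fail when $\cos\phi>0$.
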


\noindent Theorem~\ref{thm:local-tradeoff-formal} starkly illustrates why a single model cannot satisfy both the objectives simultaneously.
In fact, whenever the specialist solutions are distinct and human accuracy is imperfect, it becomes mathematically impossible
to jointly achieve optimal alignment and complementarity. In the extreme case where the human's high confidence decisions are essentially guesses ($\alpha \rightarrow 0.5$), the tradeoff becomes unbounded.
Under realistic conditions, \emph{one model is not enough} for truly effective human-AI collaboration.

\section{Human-Centered Adaptive AI Ensembles}
The fundamental tradeoff established in last section suggests that we must move beyond the single model paradigm.
Fortunately, unlike classic tradeoffs that are uniform across the input space, adequate human behavior modeling guides us that the demands here vary \textit{systematically} across regions: trust is essential where humans are highly self-confident, while performance is essential where they are not.
Thus to effectively navigate the tension, we propose an adaptive ensemble of human-centered AI models.
The key idea is to leverage two specialist models, each dedicated to its own specific objective of alignment or complementarity, and to dynamically route decisions between them based on context,
thereby achieving the best of both worlds.

\subsection{Adaptive AI for Complementarity and Alignment}
Rather than forcing a single AI model onto the complementarity-alignment Pareto frontier, we can do better by training separate specialist AI models.
The two specialists in our case naturally stem from Eq. \ref{eq:optimization_ba} and the identified complementarity-alignment tradeoff. We thus train a \textit{complementary AI} $m_c$ parameterized by $\theta_{m_c}$, which minimizes prediction loss explicitly in the complementarity region ($\theta_{m_c}^{*} = \argmin L(\mathcal{D}_c, m_c)$), and an \textit{aligned AI} $m_a$ parameterized by $\theta_{m_a}$, which minimizes the disagreement with human judgments in the alignment region ($\theta_{m_a}^{*} = \argmin \,L_h(\mathcal{D}_a, m_a)$).

Having trained specialized AI models, our main challenge is to decide, for each instance, which model's advice to present to the human DM. An ideal solution, termed \textit{oracle routing}, with access to human confidence and threshold used within CGPR model (Eq. \ref{eq:team_model}) ``knows'' whether a test instance belongs to the alignment or complementarity region, and routes accordingly:
\begin{equation}
    \label{eq:ensemble-oracle}
    m_{\text{oracle}}(\mathbf{x}) =
    \begin{cases}
    m_a(\mathbf{x}) &\text{if} \; \mathbf{x} \in \mathcal{D}_a \;\; (\text{i.e.}, \mathcal{C}^h(\mathbf{x})>\tau), \\[3pt]
    m_c(\mathbf{x}) &\text{otherwise}.
    \end{cases}
\end{equation}

\subsubsection{Rational Routing Shortcut.}
The oracle routing requires directly observing the human DM's confidence and comparing it against their threshold, yet accurately estimating such human internal states is difficult, noisy, and context-dependent.
To bypass this complexity, we propose the \textit{Rational Routing Shortcut (RRS)} mechanism, which makes instance-wise routing decisions solely on the confidence estimates of the specialized models themselves.
Specifically, let
$\mathcal{C}^c(\mathbf{x})$
and $\mathcal{C}^a(\mathbf{x})$ be the respective confidence of the complementary and aligned specialists in their predictions. RRS routes
to the specialist with greater confidence:
\begin{equation}
    \label{eq:RRS_definition}
    m_\text{RRS}(\mathbf{x}) =
    \begin{cases}
        m_a(\mathbf{x}) & \text{if } \mathcal{C}^a(\mathbf{x}) \geq \mathcal{C}^c(\mathbf{x}), \\[4pt]
        m_c(\mathbf{x}) & \text{otherwise}.
    \end{cases}
\end{equation}

\paragraph{Why RRS intuitively works?}
RRS implicitly capitalizes on the intuition that higher model confidence is indicative of an instance falling within that model's designated region of expertise, using confidence as an \textit{implicit signal for region membership}.
Hence, RRS intelligently approximates oracle-like routing without ever directly observing human confidence.
Remarkably, this simple shortcut turns out to not only be intuitively appealing but also provably near-optimal \textit{for team decision making}:

\begin{theorem}[Near-Oracle Guarantee for RRS]
\label{thm:RRS}
Let $\mathcal D_a$ and $\mathcal D_c$ denote the alignment and complementarity regions, and let $m_a,m_c$ be the corresponding aligned and complementary specialists.
Suppose we are given real-valued confidence-estimation functions $\mathcal C^a,\,\mathcal C^c: \mathcal{X} \rightarrow [0,1]$.
Fix $\varepsilon\!\in\![0,1]$ and suppose that for every instance $\mathbf x$:

\begin{enumerate}[label=(\roman*),topsep=2pt,itemsep=1pt,leftmargin=13pt]
\item \textbf{(Calibration)}
$|\mathcal C^c(\mathbf x)-\mathbb{P}[m_c(\mathbf x)=y\mid\mathbf x]|\le\varepsilon$ and
$|\mathcal C^a(\mathbf x)-\mathbb{P}[m_a(\mathbf x)=h(\mathbf x)\mid\mathbf x]|\le\varepsilon$.

\item \textbf{(Estimator dominance in alignment region)}
If $\mathbf x\in\mathcal D_a$ then $\mathcal C^a(\mathbf x)\ge\mathcal C^c(\mathbf x)$.

\item \textbf{(Bounded sub-optimality outside alignment region)}
If $\mathbf x\in\mathcal D_c$ and $\mathcal C^a(\mathbf x)\ge\mathcal C^c(\mathbf x)$, then
$\mathbb{P}[m_a(\mathbf x)=y]\;\ge\;\mathbb{P}[m_c(\mathbf x)=y]-\varepsilon$.
\end{enumerate}

\noindent
Then the expected CGPR team accuracy achieved by the Rational Routing Shortcut satisfies
\[
\mathrm{Accuracy}_{\mathrm{RRS}}
\;\;\ge\;\;
\mathrm{Accuracy}_{\mathrm{Oracle}}
\;-\;\varepsilon.
\]
\end{theorem}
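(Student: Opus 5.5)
The plan is to write the CGPR team accuracy as a sum over the two regions, then compare RRS with oracle routing region by region. Any model-selection rule $m$ induces a presented recommendation. Under Eq.~\ref{eq:team_model}, the expected team accuracy is
\begin{align*}
\mathrm{Acc}(m) \;=\;& \mathbb{P}[\mathcal D_a]\,(1-L_{a,h}) \\
&+ \mathbb{E}_{\mathbf x}\Bigl[\mathbf 1\{\mathbf x\in\mathcal D_c\}\bigl(r(m)\,\mathbb{P}[m(\mathbf x)=y\mid\mathbf x] + (1-r(m))\,\mathbb{P}[h(\mathbf x)=y\mid\mathbf x]\bigr)\Bigr],
\end{align*}
with $r(m)=1-L_h(\mathcal D_a,m)$. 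This is just the complement of the decomposition in Eq.~\ref{eq:team_loss}, written pointwise. Two features of this expression drive the whole argument. First, the $\mathcal D_a$ term does not depend on $m$ at all. Second, the reliance probability $r(m)$ depends on $m$ only through its behavior on $\mathcal D_a$.

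\textbf{Step 1: RRS and the oracle have the same reliance probability.} By assumption (ii), every $\mathbf x\in\mathcal D_a$ satisfies $\mathcal C^a(\mathbf x)\ge\mathcal C^c(\mathbf x)$. By Eq.~\ref{eq:RRS_definition}, RRS therefore presents $m_a(\mathbf x)$ there; the tie-breaking convention $\ge$ in RRS matches the non-strict inequality in (ii). The oracle of Eq.~\ref{eq:ensemble-oracle} also presents $m_a$ on $\mathcal D_a$. Hence $m_{\mathrm{RRS}}$ and $m_{\mathrm{oracle}}$ agree on $\mathcal D_a$, so $L_h(\mathcal D_a,m_{\mathrm{RRS}})=L_h(\mathcal D_a,m_{\mathrm{oracle}})=L_h(\mathcal D_a,m_a)$. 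Call this common reliance $r\in[0,1]$. Both the alignment-region accuracy and $r$ are then identical for the two routers.

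\textbf{Step 2: pointwise comparison on $\mathcal D_c$.} With $r$ fixed, the difference in team accuracy is
\[
\mathrm{Acc}_{\mathrm{Oracle}}-\mathrm{Acc}_{\mathrm{RRS}} \;=\; r\,\mathbb{E}\Bigl[\mathbf 1\{\mathbf x\in\mathcal D_c\}\bigl(\mathbb{P}[m_c(\mathbf x)=y\mid\mathbf x]-\mathbb{P}[m_{\mathrm{RRS}}(\mathbf x)=y\mid\mathbf x]\bigr)\Bigr].
\]
On the part of $\mathcal D_c$ where $\mathcal C^a<\mathcal C^c$, RRS presents $m_c$, so the integrand is $0$. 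On the part where $\mathcal C^a\ge\mathcal C^c$, RRS presents $m_a$, and assumption (iii) bounds the integrand by $\varepsilon$. Since $r\le1$ and $\mathbb{P}[\mathcal D_c]\le1$, the difference is at most $r\,\varepsilon\,\mathbb{P}[\mathcal D_c]\le\varepsilon$, which gives the claim.

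\textbf{Main obstacle.} The potential difficulty is that trust $r$ is a global quantity coupling the regions: a router that misroutes even occasionally inside $\mathcal D_a$ would change $r$ and affect accuracy everywhere in $\mathcal D_c$. Step 1 removes this issue only because (ii) forces exact agreement with the oracle on $\mathcal D_a$, so that step must be stated carefully. I would also note that the calibration assumption (i) is not needed for this bound. It serves to make (ii)--(iii) plausible, since calibrated confidences track the true success probabilities up to $\varepsilon$. If it were needed, the natural use would be to derive a version of (iii) from calibration, at the cost of an extra $2\varepsilon$ slack.
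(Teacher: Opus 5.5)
Your proposal is correct and follows essentially the same route as the paper's proof: decompose the CGPR accuracy by region, use (ii) to show that RRS and the oracle coincide on $\mathcal D_a$ so they share the reliance $r$, then apply (iii) pointwise on $\mathcal D_c$ to bound the gap by $r\varepsilon\le\varepsilon$. The paper likewise remarks that the calibration condition (i) is not needed.
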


\subsection{Theoretical Analysis: The Benefit of Two Models}
To understand when and why our adaptive ensemble is provably advantageous, we analyze the performance gap between the (optimal) single-model solution and an oracle-style adaptive ensemble that uses the alignment specialist on $\mathcal{D}_a$ and the complementarity
specialist on $\mathcal{D}_c$. This yields a quantitative characterization of
when adaptive combination of two models is fundamentally better than one model.

Let $L_a(\theta)$ and $L_c(\theta)$ denote the losses restricted to the alignment and complementarity regions, respectively, with $\theta_{m_a}^*$ and $\theta_{m_c}^*$ being their corresponding minimizers. Define distance between specialist models as $D \coloneqq \|\theta_{m_a}^* - \theta_{m_c}^*\|$, and human accuracy factor $\kappa \coloneqq 2\alpha - 1$.

\begin{theorem}[Adaptive AI Performance Gain]
\label{thm:adaptive-gap-strong-convexity}
Assume the underlying prediction losses in the alignment and complementarity regions are $\mu$--strongly convex in $\theta$. Let $g_w(\theta) := w\,L_a(\theta) + (1-w)\,L_c(\theta)$ denote a weighted single-model surrogate objective. In particular, for $w = p := \mathbb{P}[\mathbf{x} \in \mathcal{D}_a]$, let $L_{\mathrm{single}}^* := \min_\theta g_p(\theta)$ be the loss of the best single model over the full population. The adaptive ensemble achieves
\(
L_{\mathrm{adapt}} \;=\; p\,L_a(\theta_{m_a}^*) + (1-p)\,L_c(\theta_{m_c}^*)
\). Then, the advantage of the adaptive ensemble can be lower bounded as:
\[
\Gamma_{\mathrm{team}} \;:=\; L_{\mathrm{single}}^* - L_{\mathrm{adapt}}
\;\ge\; \frac{\kappa \cdot \mu \cdot p(1-p) \cdot D^2}{2}.
\]
\end{theorem}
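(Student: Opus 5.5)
The plan is to compare the two objectives pointwise in $\theta$, using strong convexity around each specialist's minimizer, and then minimize the resulting quadratic lower bound over $\theta$.

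\textbf{Step 1: curvature of the alignment objective.} By Lemma~\ref{lem:alignment_sensitivity},
\[
L_a(\theta)=L_h(\mathcal D_a,m)=\kappa\,L(\mathcal D_a,m)+(1-\alpha).
\]
So, as a function of $\theta$, $L_a$ is an affine image of the underlying prediction loss on $\mathcal D_a$ with slope $\kappa$. If that prediction loss is $\mu$-strongly convex and $\kappa>0$ (i.e.\ $\alpha>1/2$), then $L_a$ is $\kappa\mu$-strongly convex. The complementarity objective $L_c$ is $\mu$-strongly convex by assumption. Since $\kappa\le 1$, both objectives are at least $\kappa\mu$-strongly convex.

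\textbf{Step 2: quadratic growth at each specialist.} Each specialist is the minimizer of its own objective, so $\nabla L_a(\theta^*_{m_a})=0$ and $\nabla L_c(\theta^*_{m_c})=0$. Strong convexity then gives, for every $\theta$,
\[
L_a(\theta)-L_a(\theta^*_{m_a})\ge \tfrac{\kappa\mu}{2}\|\theta-\theta^*_{m_a}\|^2,
\qquad
L_c(\theta)-L_c(\theta^*_{m_c})\ge \tfrac{\mu}{2}\|\theta-\theta^*_{m_c}\|^2\ge \tfrac{\kappa\mu}{2}\|\theta-\theta^*_{m_c}\|^2 .
\]

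\textbf{Step 3: combine and minimize over $\theta$.} Take the $p$-weighted sum of these two inequalities. For every $\theta$,
\[
g_p(\theta)-L_{\mathrm{adapt}}\ge \tfrac{\kappa\mu}{2}\bigl(p\|\theta-\theta^*_{m_a}\|^2+(1-p)\|\theta-\theta^*_{m_c}\|^2\bigr).
\]
The right-hand side is a quadratic in $\theta$ whose minimizer is the weighted mean $p\theta^*_{m_a}+(1-p)\theta^*_{m_c}$. Its minimum value equals $\tfrac{\kappa\mu}{2}\,p(1-p)\,D^2$, which follows from the identity $p(1-p)^2+(1-p)p^2=p(1-p)$. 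Since the bound holds for every $\theta$, it holds in particular at the minimizer of $g_p$. Therefore $\Gamma_{\mathrm{team}}=\min_\theta g_p(\theta)-L_{\mathrm{adapt}}\ge \kappa\mu\,p(1-p)D^2/2$.

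\textbf{Main obstacle.} The delicate point is Step 1: carrying strong convexity from the ground-truth prediction loss over to the alignment loss $L_a$. Lemma~\ref{lem:alignment_sensitivity} is stated for 0-1 loss under conditional independence. I would invoke it in its surrogate (logistic, $\ell_2$-penalized) form, in which $L_a$ differs from $\kappa$ times the prediction loss only by a constant, so the Hessian scales exactly by $\kappa$. The argument also needs $\alpha>1/2$ so that $\kappa>0$; otherwise the bound is vacuous. Once that curvature transfer is granted, Steps 2 and 3 are routine.
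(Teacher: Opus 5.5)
Your proof is correct and follows essentially the same route as the paper: the affine reduction $L_a=(1-\alpha)+\kappa\,\widetilde L_a$ from Lemma~\ref{lem:alignment_sensitivity}, quadratic growth at each specialist from strong convexity, and minimization of the weighted two-center quadratic. The only difference is where the slack $\kappa\le 1$ is used: you use it up front by weakening $L_c$'s modulus to $\kappa\mu$, which gives exactly $p(1-p)D^2$. The paper instead keeps the weights $p\kappa$ and $1-p$, obtains the tighter $\frac{p\kappa(1-p)}{(1-p)+p\kappa}D^2$, and only then applies $(1-p)+p\kappa\le 1$.
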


\noindent This shows how the benefit of using two models is not merely conceptual but can be precisely quantified in terms of human reliability, loss surface geometry, task mixture balance and specialist divergence. Specifically, the gap increases when:
(i) humans are more reliable in the aligned region ($\kappa$ large),
(ii) the loss landscape is ``well-conditioned'' or sharply curved ($\mu$ large),
(iii) the two regions occur with comparable frequency ($p(1-p)$ large), and
(iv) the alignment and complementarity optima are far apart ($D^2$ large).

\paragraph{Applicability to the behavior-aware team objective.}
Although the behavior-aware objective in Eq.~\ref{eq:team_loss} is non-convex due to the multiplicative term $(1 - L_h(\mathcal{D}_a,m))\,L(\mathcal{D}_c,m)$, the insight and bound of Theorem~\ref{thm:adaptive-gap-strong-convexity} remain indicative for the behavior-aware objective under mild regularity.
When human reliance $L_h(\mathcal{D}_a,m)$ varies smoothly with model confidence and increases with alignment performance, the objective locally behaves like a convex combination of two strongly convex region-specific objectives (up to bounded perturbations).
In this regime, the optimization landscape near the specialist solutions retains strong-convexity curvature, and the adaptive–single gap continues to scale quadratically with specialist separation.
This approximation is consistent with our empirical results (Fig.~\ref{fig:ca_gains_by_param}).

\subsection{Navigating Real-World Uncertainties}
In practice, it is difficult to tell whether a given instance belongs to the complementarity or alignment region.
We now (i) show how to train specialists under imperfect region knowledge, and (ii) quantify how this uncertainty propagates to test-time routing and resulting adaptive AI gains.

\subsubsection{Impact of uncertainty at training time.}
In real-world settings,
humans' self-confidence threshold $\tau$ required to evaluate whether an instance belongs to the complementarity or alignment region may not only be unknown to the AI model developer, but also vary across different DMs and across time.
To generalize to more realistic scenarios, we move beyond static human self-confidence threshold $\tau$ assumption, and instead model it as being drawn per-decision from a known (or estimated) distribution $f_T$.
With $\tau$ stochastic, we introduce the notion of \textit{probabilistic region membership} and expected region loss, which transforms our optimization problems into instance-weighted empirical risk minimization objectives.
In fact, we can neatly integrate this into our analysis by simply generalizing loss definitions. For example, the \textit{expected} complementarity loss becomes:
$L(\mathcal{D}_c,m) = \mathbb{E}_{\tau}\!\Bigl[
      \frac{1}{|\mathcal{D}|}
      \sum_{\mathbf{x}_i\in\mathcal{D}}
      \mathbb{I}\{\mathcal{C}_i^h\le\tau\}\,
      \ell\!\bigl(m(\mathbf{x}_i),y_i\bigr)
   \Bigr]$ $= \frac{1}{|\mathcal{D}|}
      \sum_{\mathbf{x}_i\in\mathcal{D}}
      \underbrace{\mathbb{E}[(\mathbf{x}_i, y_i) \in \mathcal{D}_c]}_{w_i^c} \,\,
      \ell\!\bigl(m(\mathbf{x}_i),y_i\bigr)$.
\begin{proposition}
\label{prop:comp-weight}
When human confidence thresholds are drawn from a known distribution with CDF $F_T(\cdot)$, the optimal instance weighting in the complementarity region is given by $w_i^c = 1 - F_T(\mathcal{C}_i^h)$.
\end{proposition}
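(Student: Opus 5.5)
The plan is to start from the definition of the expected complementarity loss given just before the statement and exchange the expectation over the random threshold with the finite sum over instances. Since the sum is finite and each summand is bounded by $\ell(m(\mathbf{x}_i),y_i)$, linearity of expectation applies directly. The per-instance loss $\ell(m(\mathbf{x}_i),y_i)$ and the human confidence $\mathcal{C}_i^h$ do not depend on $\tau$, so they can be pulled out of the expectation. This leaves the weight
\[
w_i^c=\mathbb{E}_{\tau}\bigl[\mathbb{I}\{\mathcal{C}_i^h\le\tau\}\bigr]=\mathbb{P}_{\tau\sim f_T}\bigl[\tau\ge\mathcal{C}_i^h\bigr].
\]

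Next, I would evaluate this probability through the CDF. By definition $F_T(c)=\mathbb{P}[\tau\le c]$, so
\[
\mathbb{P}[\tau\ge c]=1-\mathbb{P}[\tau<c]=1-F_T(c)+\mathbb{P}[\tau=c].
\]
Under the standing assumption that $T$ has a density $f_T$, and hence no atoms, the point-mass term vanishes. This gives $w_i^c=1-F_T(\mathcal{C}_i^h)$. I will remark that if $F_T$ has atoms, the same argument gives $1-F_T(\mathcal{C}_i^h{}^-)$, which differs from the stated weight only on a null set of thresholds.

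Finally, I would justify the word ``optimal.'' The weighted objective $\frac{1}{|\mathcal{D}|}\sum_i w_i^c\,\ell(m(\mathbf{x}_i),y_i)$ coincides exactly with the expected complementarity loss for every model $m$. Its minimizer is therefore the minimizer of the expected objective, so any other weighting would optimize a different, biased objective. The symmetric computation gives $w_i^a=F_T(\mathcal{C}_i^h)$ for the alignment region, consistent with $w_i^a+w_i^c=1$.

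The main obstacle is mostly interpretive rather than technical: handling the boundary convention $\mathcal{C}^h\le\tau$ against the CDF's $\le$ convention. I will resolve this by invoking the continuity implied by the density $f_T$. I also need to make sure $\tau$ is drawn independently of the instance, so that the conditional expectation given $\mathbf{x}_i$ equals the marginal probability.
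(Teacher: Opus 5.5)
Your proposal is correct and follows essentially the same route as the paper. Both identify $w_i^c$ as $\mathbb{E}_\tau[\mathbb{I}\{\mathcal{C}_i^h\le\tau\}]=\mathbb{P}[\tau\ge\mathcal{C}_i^h]$ and use the density assumption to get $1-F_T(\mathcal{C}_i^h)$; the paper integrates $f_T$ over $[\mathcal{C}_i^h,1]$, while you take the CDF complement and note the absence of atoms, and your remarks on exchanging the sum and expectation and on the weighted objective matching the expected loss (which is what ``optimal'' means here) just make explicit what the paper leaves to its main-text derivation.
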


\noindent A symmetric case can be made for expected alignment loss, with $w_i^a = F_T(\mathcal{C}_i^h)$. This weighting scheme naturally extends to both adaptive and behavior-aware AI training.

In line with insights from prior work,
this weighting is principled and flexible: in fact, assuming uniform $\tau$ yields $w^c_i = 1 - \mathcal{C}_i^h$, recovering previously effective heuristics as a special case.
Additionally, if
$F_T(\cdot)$ or individual confidence values are unknown, uniform distribution assumption and confidence-prediction models (learned from a small pilot) can provide robust proxies, enabling practical instance-weighted training.
Finally, while confidence calibration does influence the attainable gains, incorporating confidence into the AI optimization process leads to behavior-aware models that substantially outperform non-behavior-aware baselines~\cite{mahmood2024designing}.

\paragraph{Impact of uncertainty at test time.}
At deployment, uncertainty about region membership can lead to misrouting, i.e., instances assigned to the wrong specialist. Let $\bar \rho$ be the misrouting probability (expected fraction of misrouted instances), and $\mathcal{H}$ the average region entropy, measuring uncertainty. The expected adaptive team loss is
\begin{align}
L_{\text{adaptive}}
&= (1-\bar\rho)\big[L(\mathcal{D}_a, m_a) + L(\mathcal{D}_c, m_c)\big] \notag \\
&\quad +\ \bar\rho\big[L(\mathcal{D}_a, m_c) + L(\mathcal{D}_c, m_a)\big].
\end{align}
As we show in Appendix, the misrouting probability can actually be bounded by the entropy: $\bar\rho \leq \frac{\mathcal{H}}{2 \log 2}$. This, coupled with Theorem \ref{thm:adaptive-gap-strong-convexity}, helps us precisely quantify the impact of region uncertainty.

\begin{corollary}[Adaptive AI Performance Gain Under Uncertainty]
\label{cor:gain-uncertainty}
Let $\Gamma_{\text{team}}$ denote the performance gain of adaptive AI over the best single model, and define $p := \mathbb{P}[\mathbf{x} \in \mathcal{D}_a]$. Then, under uncertainty level $\mathcal{H}$,
\begin{equation}
\Gamma_{\text{team}} \geq \left(1 - \frac{\mathcal{H}}{2 \log 2}\right) \,\frac{\kappa \cdot \mu \cdot p(1-p) \cdot D^2}{2},
\end{equation}
\end{corollary}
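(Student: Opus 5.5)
We derive Corollary~\ref{cor:gain-uncertainty} by combining the misrouting decomposition of the adaptive team loss with Theorem~\ref{thm:adaptive-gap-strong-convexity}, and then using the entropy bound $\bar\rho \le \mathcal H/(2\log 2)$. The guiding idea is that misrouting interpolates the routed ensemble between the oracle adaptive ensemble and the ``swapped'' ensemble (aligned specialist on $\mathcal D_c$, complementary specialist on $\mathcal D_a$). The gain should therefore degrade at most linearly in $\bar\rho$.

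\textbf{Step 1: write the gain as a convex combination.} Subtracting the displayed expression for $L_{\text{adaptive}}$ from $L^*_{\text{single}}$ gives
\[
\Gamma_{\text{team}} = (1-\bar\rho)\bigl[L^*_{\text{single}} - L_{\text{adapt}}\bigr] + \bar\rho\bigl[L^*_{\text{single}} - L_{\text{swap}}\bigr],
\]
where $L_{\text{adapt}}$ is the oracle ensemble loss of Theorem~\ref{thm:adaptive-gap-strong-convexity} and $L_{\text{swap}}$ collects the misrouted terms. For this to hold, both bracketed terms must carry the same $p$/$(1-p)$ weighting used in $g_p$. I will read the region losses in the displayed decomposition as population-weighted, so that $L(\mathcal D_a,\cdot)$ already includes the factor $p$.

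\textbf{Step 2: lower-bound each bracket.} The first bracket is at least $\kappa\mu p(1-p)D^2/2$ by Theorem~\ref{thm:adaptive-gap-strong-convexity}. For the second bracket, the target bound only needs it to be nonnegative. I would argue this by noting that the best single model $\theta^*$ minimizing $g_p$ can always be deployed in place of either specialist. The swapped ensemble is then evaluated against the single-model baseline; if the baseline is interpreted as the single model matched to the routing, it is no worse than the swapped ensemble. Substituting these two bounds gives $\Gamma_{\text{team}} \ge (1-\bar\rho)\,\kappa\mu p(1-p)D^2/2$.

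\textbf{Step 3: substitute the entropy bound.} Using $\bar\rho \le \mathcal H/(2\log 2)$ from the Appendix, and the fact that the bound from Step~2 is decreasing in $\bar\rho$, we obtain the stated inequality.

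\textbf{Main obstacle.} The delicate step is the second bracket. A specialist used on the wrong region can in general be worse than $\theta^*$, so nonnegativity of $L^*_{\text{single}} - L_{\text{swap}}$ is not automatic. If it cannot be justified directly, I would fall back on defining the gain under uncertainty relative to the oracle-routed advantage. Concretely, I would treat misrouted instances as contributing zero advantage rather than negative advantage, which corresponds to the conservative reading that a misrouted instance does no better than the single-model baseline. I would state this interpretation explicitly as the assumption under which the corollary holds.
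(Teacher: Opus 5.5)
With your fallback in place, your route is the paper's. Write $L_{\mathrm{swap}}$ for the misrouted loss. Your identity
\[
\Gamma_{\mathrm{team}}=(1-\bar\rho)\bigl[L^*_{\mathrm{single}}-L_{\mathrm{adapt}}\bigr]+\bar\rho\bigl[L^*_{\mathrm{single}}-L_{\mathrm{swap}}\bigr]
\]
is a rewriting of the paper's $\Gamma_{\mathrm{team}}(\mathcal H)=\Gamma_{\mathrm{team}}(0)-\bar\rho\,\Delta$ with $\Delta=L_{\mathrm{swap}}-L_{\mathrm{adapt}}$. Nonnegativity of your second bracket, $L_{\mathrm{swap}}\le L^*_{\mathrm{single}}$, is exactly the extra hypothesis $\Delta\le\Gamma_{\mathrm{team}}(0)$ that the paper imposes to get the multiplicative form. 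Your Step 3 then coincides with the paper's. Your population-weighted reading of the region losses is the right way to reconcile the displayed misrouting decomposition with $L_{\mathrm{adapt}}=pL_a+(1-p)L_c$; the paper leaves this implicit.

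What you should not keep is the Step 2 argument. Optimality of $\theta^*$ only bounds $L^*_{\mathrm{single}}$ from \emph{above}, while you need $L^*_{\mathrm{single}}\ge L_{\mathrm{swap}}$. That inequality is false in general, so the hypothesis is genuinely load-bearing rather than a technicality.

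Take the simplest strongly convex instance: $\widetilde L_a(\theta)=\frac{\mu}{2}\|\theta-\theta^*_{m_a}\|^2$, $L_c(\theta)=\frac{\mu}{2}\|\theta-\theta^*_{m_c}\|^2$, and $\kappa=1$. Then $L_{\mathrm{adapt}}=0$, $L^*_{\mathrm{single}}=\frac{\mu}{2}p(1-p)D^2$, and $L_{\mathrm{swap}}=\frac{\mu}{2}D^2$. So your second bracket equals $-\frac{\mu}{2}\bigl(1-p(1-p)\bigr)D^2<0$ for every $D>0$. With $p=\frac12$ and all posteriors equal to $\frac12$ (so $\bar\rho=\frac12$ and $\mathcal H=\log 2$), the true gain is $-\frac{\mu}{8}D^2$, while the corollary's right-hand side is $\frac{\mu}{16}D^2$.

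Your description of the fallback is also backwards. Replacing a negative contribution by zero is the optimistic choice, not the conservative one. The assumption actually says the swapped assignment does \emph{no worse} than the best single model.

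If you want a hypothesis-free result, the paper additionally records the additive bound $\Gamma_{\mathrm{team}}\ge\frac{\kappa\mu}{2}p(1-p)D^2-\frac{\mathcal H}{2\log 2}$. It follows from $0\le\Delta\le1$ for losses in $[0,1]$.
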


This shows that the benefit of adaptive ensembles degrades gracefully as uncertainty increases, and even moderate certainty suffices for substantial gains. In summary, adaptive AI is robust to real-world uncertainty in both training and deployment, and principled weighting and entropy-based analyses enable reliable human-AI collaboration even when perfect information is unattainable.

\section{Evaluation on Simulated Data}
To rigorously evaluate our theoretical framework and substantiate the practical benefits of our proposed adaptive AI ensemble, we conducted controlled simulations on \textsf{College Admissions} data that we synthesize. This setting enables precise measurement of the complementarity-alignment tradeoff, which helps validate Theorem \ref{thm:local-tradeoff-formal}, and direct quantification of adaptive AI ensemble gains under varying conditions, which helps validate Theorem~\ref{thm:adaptive-gap-strong-convexity} and Corollary \ref{cor:gain-uncertainty}.

\subsection{Experimental Setup}
We mimic a committee making a binary admission decision ($\mathcal{Y}=\{+1, -1\}$) based on an applicant's Grade Point Average (GPA) and standardized test score. Consistent with some real settings, we assume applicants come from two subpopulations that differ in how well these features ($\mathbf{x} = \{x_\text{GPA}, x_\text{Score}\}$) indicate academic potential:
\begin{enumerate}
    \item \textit{Privileged Group:}
    We model the applicants from this group as residing within the feature space of alignment region ($\mathcal{D}_a$),
    where human evaluators are confident and typically accurate. Due to access to better schools and test prep, an applicant's test score is assumed to be the most predictive feature.
    \item \textit{Underprivileged Group:} We model these applicants as residing within the complementarity region ($\mathcal{D}_c$), where evaluators are less confident. Here, an applicant's GPA, reflecting long-term performance in their specific context, is assumed to be more predictive.
\end{enumerate}

\paragraph{Ground truth generation.}
For each instance, $x_\text{GPA}$ and $x_\text{Score}$ are sampled uniformly from $[0,1]$. The instance is assigned to the alignment region $\mathcal{D}_a$ with probability $p$ (default $p=0.5$). The ground truth label $y$ is then generated based on the region-specific predictive feature:
\begin{align*}
    \mathbf{x} \in \mathcal{D}_a: \; y &= \mathbb{I}[(0.5+\delta)x_\text{Score} + (0.5-\delta)x_\text{GPA} \ge 0.5] \\
    \mathbf{x} \in \mathcal{D}_c: \; y &= \mathbb{I}[(0.5+\delta)x_\text{GPA} + (0.5-\delta)x_\text{Score} \ge 0.5]
\end{align*}
Here, $\delta \in [0, 0.5]$ (default $\delta=0.25$) controls the feature importance and thus the divergence between the two regions' optimal decision boundaries, and consequently the specialist model divergence $D$.
This scenario provides an intuitive analog for the tension between alignment and complementarity, while offering precise control over key parameters.

\paragraph{Human DM behavior.}
We simulate human judgments $h(\mathbf{x})$ to be correct with probability $\alpha$ in $\mathcal{D}_a$ (default $\alpha=1$); human accuracy in $\mathcal{D}_c$ does not affect the CGPR dynamics and is set to a random baseline value when needed.
Human confidence $\mathcal{C}^h(\mathbf{x})$ is then drawn uniformly from $\pm 0.1$ around that region's accuracy. This couples confidence and accuracy in a noisy, realistic manner. Unless otherwise noted, group membership is treated as known for routing (i.e., oracle routing) to isolate the other theoretical factors.

\paragraph{AI training.}
We trained the following logistic regression-based AI models:
(i) \textit{Single AI}, trained to minimize the weighted objective $g_p(\theta)$ from Theorem \ref{thm:adaptive-gap-strong-convexity}, which collapses to standard AI trained for empirical risk minimization under our default settings;
(ii) \textit{Complementary AI}, trained on $\mathcal{D}_c$ using ground-truth labels $y$ and weighted loss $(1 - \mathcal{C}^h(\mathbf{x}))$;
(iii) \textit{Aligned AI}, trained on $\mathcal{D}_a$ using human predictions $h(\mathbf{x})$ as pseudo-labels and weighted loss $\mathcal{C}^h(\mathbf{x})$; and
(iv) \textit{Adaptive AI} ensemble that uses the \textit{pre-trained} specialists $m_a$ and $m_c$ and, dynamically selects the correct specialist based on the instance's known group membership ($G \in \{\mathcal{D}_a, \mathcal{D}_c\}$).

\begin{figure}[t]
    \centering
    \includegraphics[width=0.81\linewidth]{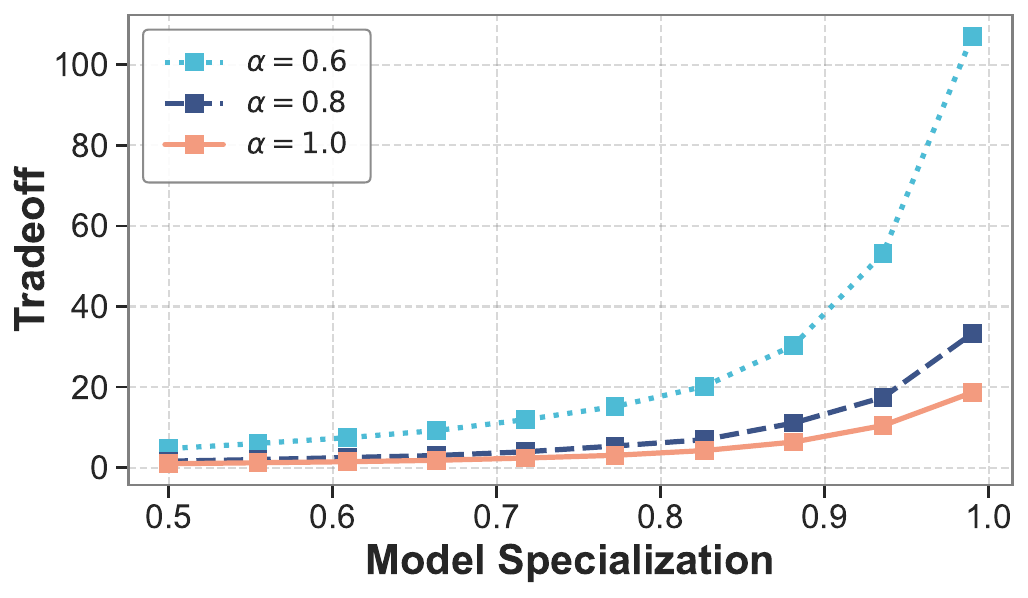}
    \caption{
    Complementarity-alignment tradeoff increases sharply with specialization, exacerbated by imperfect human DM in alignment region ($\alpha$),
    illustrating that a single model cannot simultaneously optimize for trust and performance.
    }
    \vspace{-10pt}
    \label{fig:ca_relative_tradeoff}
\end{figure}

\begin{figure*}
    \centering
    \includegraphics[width=\linewidth]{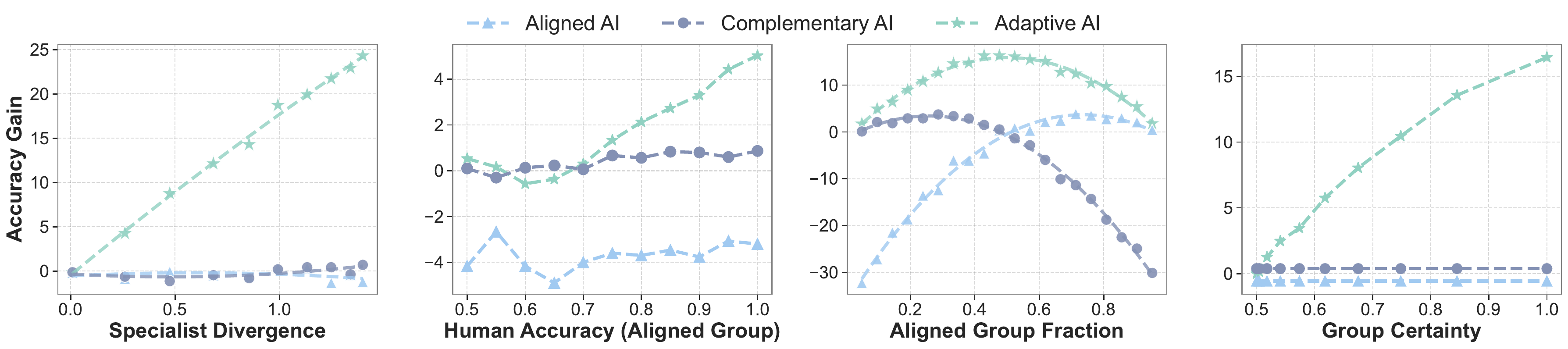}
    \caption{
    Accuracy gain ($\Gamma_\text{team}$) of adaptive AI ensemble over single AI on the \textsf{College Admissions} data.
    The plots (left to right) empirically validate that:
    (i) gain increases with specialist divergence ($\|\theta_{m_a}^* - \theta_{m_c}^* \|$);
    (ii) gain scales with human accuracy ($\alpha$) in alignment region (i.e., the aligned group's feature regime), reflecting the reliability factor $\kappa$;
    (iii) gain peaks when the task mixture is balanced (i.e., aligned group fraction $p \approx 0.5$), matching the concave $p(1-p)$ dependence; and
    (iv) gain increases linearly with group certainty ($1 - \mathcal{H} / 2\log 2$), demonstrating robustness to routing uncertainty.
    }
    \label{fig:ca_gains_by_param}
\end{figure*}

\subsection{Evaluating Complementarity-Alignment Tradeoff}

To probe the fundamental limitation of static models, we empirically measure the complementarity-alignment tradeoff defined in Theorem \ref{thm:local-tradeoff-formal}. Specifically, we assess how a small perturbation in model parameters---in the direction of the alignment specialist---impacts the model's loss in the complementarity region.
Formally, for a single-model optimum $\theta_{\mathrm{single}}^*$, a perturbation direction $v$ and small $\epsilon$, the tradeoff ratio is measured as:
\(
    \text{Tradeoff} = \frac{\Delta L(\mathcal{D}_c,\theta_{\mathrm{single}}^*+\epsilon v)}{-\Delta L(\mathcal{D}_a,\theta_{\mathrm{single}}^*+\epsilon v)},
\)
where $\Delta L(\mathcal{D}, \theta)$ denotes the change in loss on region $\mathcal{D}$ as the model parameters are nudged from $\theta_{\mathrm{single}}^*$.

To vary the degree of specialization, we increase the aligned group fraction $p$ from $0.5$ to $0.99$, thereby moving the optimal single model closer to the aligned specialist.
Figure~\ref{fig:ca_relative_tradeoff} shows that the empirical tradeoff grows sharply as $\theta_{\text{single}}^*$ approaches the aligned specialist, and
that the blow-up is faster when humans are less accurate in the aligned region (lower $\alpha$), confirming the $(1/\kappa)$ scaling predicted by theory.
These findings complement Theorem \ref{thm:local-tradeoff-formal}, and empirically establish that no single static model can simultaneously optimize alignment and complementarity when the regions are structurally distinct and human accuracy is imperfect.

\subsection{Evaluating Adaptive AI Gains}
We also empirically validate the adaptive AI performance gain predicted by our theoretical analysis (Theorem~\ref{thm:adaptive-gap-strong-convexity} and Corollary~\ref{cor:gain-uncertainty}). We systematically vary each of the four key factors in our bound---specialist divergence ($D$), human reliability ($\kappa$), task mixture balance ($p (1-p)$), and group uncertainty ($\mathcal{H}$)---and measure the resulting gain of the adaptive ensemble over the best single model. The results in Figure~\ref{fig:ca_gains_by_param} confirm all four theoretical dependencies.

\paragraph{Impact of specialist divergence.}
Here, we vary
$\delta \in [0,0.5]$, which directly controls the geometric divergence between the aligned and complementary
decision boundaries and hence the parameter distance $D$.
Figure~\ref{fig:ca_gains_by_param}(i) shows that even moderate divergence yields meaningful improvements to adaptive ensemble's accuracy gain, while large $\delta$ produces substantial adaptive advantages, confirming that \emph{specialist heterogeneity is a primary driver of adaptive benefit}.

\paragraph{Impact of human accuracy.}
Next, we vary the human accuracy in the alignment region, $\alpha \in [0.5,1]$, which modulates the reliability factor~$\kappa$ appearing in our theoretical bounds.
As shown in Figure~\ref{fig:ca_gains_by_param}(ii), the adaptive advantage increases monotonically with~$\alpha$. Higher human reliability strengthens the signal that the aligned specialist should follow, improving routing quality and magnifying the benefit of deferring to human expertise where appropriate. These results validate that \emph{gains scale with human reliability}.

\paragraph{Impact of task-mixture balance.}
Then, we vary the mixture proportion $p = \mathbb{P}[\mathbf{x}\in\mathcal{D}_a]$, which controls the relative size of the two subpopulations and determines the $p(1-p)$ scaling in our adaptive–single gap bound.
Figure~\ref{fig:ca_gains_by_param}(iii) confirms the predicted unimodal behavior: adaptive gains peak when the two regions are balanced ($p \approx 0.5$) and diminish when one region dominates the population.
When one task is rare, a single model suffers little penalty from specializing toward the larger region,
reducing the relative value of adaptivity.
This supports the theoretical insight that \emph{adaptive AI is most valuable when the environment contains multiple substantial and distinct regimes}.

\paragraph{Impact of region certainty.}
Finally, we vary the correctness of group labels, which determines how accurately the ensemble can assign each instance to the relevant specialist.
Figure~\ref{fig:ca_gains_by_param}(iv) illustrates that while perfect group information maximizes the adaptive AI advantage, substantial gains persist even when group assignments are noisy. This aligns with our theoretical observation that the benefit degrades smoothly with misrouting, and the \textit{ensemble remains robust as long as group identification is moderately reliable}.

\section{Evaluation on Real-World Data}
The controlled simulations above allowed us to isolate the structural factors predicted by our theory. To complement those findings, we examine whether the same qualitative patterns appear in a more complex perceptual domain where human competence, confidence, and uncertainty are empirically grounded. Instead of full-fledged real-world deployment, our goal is to assess whether the core mechanisms behind adaptive teaming continue to hold under realistic human variability. To this end, we evaluate on the \textsf{WoofNette} benchmark \cite{mahmood2024designing}, which was explicitly built to study human–AI collaboration under heterogeneous and instance-specific uncertainty.

\paragraph{Task and dataset.}
\textsf{WoofNette} is a 10-class classification benchmark built from ImageNet, containing five everyday objects (e.g., Gas Pump, Parachute) and five dog breeds (e.g., Australian Terrier, Dingo), with $9{,}446$ training and $4{,}054$ test images.
This structure naturally produces two regions: objects, where humans tend to be accurate and confident (approximate alignment region), and dog breeds, where human accuracy drops and confusion rises (approximate complementarity region). Importantly, instance-level variation within each class introduces additional uncertainty, making this a strong testbed for our adaptive framework.

\paragraph{Human DM behavior.}
Human predictions and self-reported confidence are estimated from a pilot study of $500$ images and $4{,}644$ annotations. We use inter-annotator agreement as a proxy for human confidence and train a lightweight ResNet-152 to generalize these confidence estimates to the full dataset, yielding $\widehat{\mathcal{C}}^h(\mathbf{x})$. Human predictions are simulated using empirical confusion matrices. Because region boundaries are not directly observable in natural perceptual tasks, each human's decision threshold $\tau$ is drawn from $\mathcal{U}[0.6,0.8]$, modeling individual variability and the bimodal confidence distribution observed in pilot data.

\paragraph{AI training.}
All AI models share a ResNet-50 backbone, initialized from ImageNet. To preserve room for human contribution, we cap training early when AI reaches human-level accuracy ($\approx 65\%$). We train the following AI paradigms:
(i) \textit{Standard AI}, trained on all data, without regard for human strengths or weaknesses; (ii) \textit{Aligned / Complementary AI}, trained with instance-weighted losses using $\widehat{\mathcal{C}}^h(\mathbf{x})$ to reflect the specialist regions;
(iii) \textit{Behavior-Aware AI}, trained to optimize team performance under the CGPR model (Eq. \ref{eq:optimization_ba}); and
(iv) \textit{Adaptive AI (Oracle, RRS)}, which combines previously trained specialists using either the oracle confidence labels (Adaptive (Oracle); Eq. \ref{eq:ensemble-oracle}) or the Rational Routing Shortcut (Adaptive (RRS); Eq. \ref{eq:RRS_definition}).

\begin{table}[t]
\centering
\begin{tabular}{lcc}
\toprule
\textbf{Paradigm} & \textbf{AI Accuracy} & \textbf{Team Accuracy} \\
\midrule
Standard AI         & $69.87_{\pm0.44}$ & $69.13_{\pm0.28}$ \\
Aligned AI          & $61.71_{\pm0.56}$ & $60.73_{\pm0.24}$ \\
Complementary AI    & $61.01_{\pm0.77}$ & $69.96_{\pm0.50}$ \\
Behavior-Aware AI   & $64.99_{\pm0.97}$ & $70.90_{\pm0.36}$ \\
Adaptive AI (Oracle)& $80.37_{\pm0.31}$ & $74.75_{\pm0.34}$ \\
Adaptive AI (RRS)   & $82.64_{\pm0.35}$ & $75.13_{\pm0.32}$ \\
\bottomrule
\end{tabular}
\caption{
Accuracy (\%) of independent AI and Human-AI team across different AI paradigms on the \textsf{WoofNette} benchmark. Human accuracy is $65.10_{\pm 0.27}$.
}
\label{tab:woofnette-results}
\end{table}

\subsection{Evaluation and Results}
For each test image, we simulate a team decision by drawing a threshold $\tau$, using the CGPR model to determine if the human defers to the AI, and recording the team accuracy over the entire test set (see Table~\ref{tab:woofnette-results}).

\paragraph{Adaptive AI ensemble outperforms baselines.}
The adaptive AI ensemble consistently achieves the highest team accuracy, outperforming both standard and behavior-aware baselines by clear margins. Notably, both Adaptive (Oracle) and Adaptive (RRS) achieve substantial improvements, with RRS variant performing virtually as well as the Oracle, despite having no access to human confidence at test time.

\paragraph{Gains realized despite weaker base AI models.}
The specialists used in the adaptive ensemble have lower individual accuracy than the standard AI, yet the adaptive team significantly outperforms the standard model. This reinforces the central insight of our theory and synthetic experiments, and underscores the value of intelligent combination and context-aware delegation over raw classifier performance.

\paragraph{Robustness to real-world constraints.}
This behavior-grounded experiment highlights that the adaptive AI framework, including its practical RRS instantiation, is robust to the uncertainties of a real-world task. Teaming gains persist even when region boundaries are instance-specific, human confidence is noisy, and AI predictions are imperfect.

\section{Conclusion}
This work advances the science of human-AI collaboration by introducing a principled and practical framework for adaptive teaming---one that is deeply informed by human behavioral realities, rigorous theory, and robust empirical validation. We formalize the complementarity-alignment tradeoff and show that the tension between performance-boosting and trust-building is an inherent limitation of single-model approaches, and provide the first tight theoretical characterization of this tradeoff. Our adaptive ensemble paradigm, powered by Confidence-Gated Probabilistic Reliance (CGPR) team decision making model and Rational Routing Shortcut (RRS)-based specialist selection, consistently achieves highest team performance in both synthetic and real-world experiments. Crucially, our results demonstrate that trustworthy, high-performing human-AI teams are possible only when AI systems dynamically align with human strengths and complement them where it matters most. We believe these advances contribute to the design of reliable, human-centered AI, enabling future systems that not only perform, but truly partner with, their human users.

\section*{Acknowledgments}
We thank the National Science Foundation for support under grants IIS-2229876 and IIS-2340209 for this work. Any opinions, findings, conclusions, or recommendations expressed here are those of the authors alone.

\bibliography{aaai2026}

\appendix
\setcounter{section}{0}
\renewcommand{\thesection}{\Alph{section}}
\renewcommand{\thesubsection}{\thesection.\arabic{subsection}}
\renewcommand\thefigure{A\arabic{figure}}
\renewcommand\thetable{A\arabic{table}}
\setcounter{figure}{0}
\setcounter{table}{0}

\section{Extended Related Work}

\paragraph{Human-AI Collaboration and Trust.}
Successful human-AI collaboration extends beyond merely optimizing AI accuracy, critically relying on human cognitive and behavioral dynamics such as trust, calibrated reliance, and interpretability \citep{yin2019understanding, bansal2019beyond, kolomaznik2024role}. Empirical studies consistently demonstrate that enhancing AI accuracy alone does not guarantee improved team performance, as unpredictable or counterintuitive AI behavior often erodes human trust and results in suboptimal team decision-making \citep{mayer2025human, chiang2021you, lu2024does}. Moreover, discrepancies between users' mental models of AI capabilities and actual AI performance can lead to both over- and under-reliance, exacerbating performance issues, especially in complex decision-making contexts \citep{vafa2024large, steyvers2025large, goh2024large}.

\paragraph{Complementarity vs. Alignment in AI Systems.}
Recent approaches in human-AI collaboration predominantly pursue the notion of complementarity, aiming to enhance overall decision quality by designing AI systems that selectively intervene on tasks or instances where human judgment is weak \citep{madras2018predict, lai2022human, bansal2021most, steyvers2022bayesian}. Although intuitively appealing, this paradigm frequently overlooks human trust dynamics, as disagreements between AI recommendations and confident human judgments—regardless of AI correctness—can significantly degrade trust and diminish reliance \citep{lu2021human, chong2022human, holstein2021designing}. Conversely, alignment-focused strategies, which prioritize human-AI agreement to foster trust, risk reinforcing incorrect human intuitions, thereby failing to leverage AI's potential for correcting human errors \citep{grgic2022taking, askell2021general, ouyang2022training}. Our work demonstrates, both theoretically and empirically, that single-model systems inherently struggle to simultaneously achieve high trust (alignment) and performance (complementarity), precisely quantifying this fundamental tradeoff for the first time.

\paragraph{Behavior-Aware AI and Optimization for Team Performance.}
To address these challenges, recent studies advocate for behavior-aware AI frameworks, which explicitly incorporate human behavioral models, such as trust calibration and confidence-based delegation, into the AI training process \citep{bansal2021most, mahmood2024designing}. However, existing behavior-aware approaches typically adopt simplistic or static assumptions regarding human decision-making processes, such as rational or fixed-threshold reliance models, limiting their robustness and flexibility in realistic and diverse human contexts \citep{vaccaro2024combinations, bansal2021most}. Our work advances this literature by introducing the Confidence-Gated Probabilistic Reliance model that dynamically integrates nuanced human trust and confidence dynamics, enabling more accurate modeling and optimization of team outcomes.

\paragraph{Ensemble and Multi-Objective Learning.}
Our adaptive, human-centered ensemble framework connects to a broad tradition of adaptive expert and ensemble methods \citep{jacobs1991adaptive, jordan1991hierarchies, shazeer2017outrageously, fedus2022switch, zoph2022st}. Classic mixture-of-experts and multi-task learning architectures have demonstrated the power of learning to dynamically combine specialized predictors for diverse tasks \citep{pfeiffer2021adapterfusion, momma2022multi, yu2020gradient}, and recent works have generalized this to settings involving multiple objectives and stakeholder preferences \citep{chakraborty2024maxmin, yang2024rewards, wang2025map}. 
However, these methods still generally reduce to one single (ensemble) model aiming to operate close to the desired point on Pareto frontier. Additionally, and perhaps more importantly, these ensemble methods rarely integrate human trust, confidence, or behavioral feedback as first-class objectives, nor do they address the unique challenges of human-AI team decision-making. ``Learning to defer'' approaches \citep{madras2018predict, mozannar2020consistent} have pioneered selective delegation, but typically focus on binary defer-or-predict mechanisms without the rich behavioral nuance of human confidence and alignment either.
\section{Theory}
\label{app:theory}

\subsection{Hessian eigenvalue bounds for logistic loss}
\label{app:hessian-bounds}

Many of this paper's bounds compare solutions of regularized logistic objectives via second-order control (e.g., Taylor expansions and strong convexity--type arguments). Rather than introducing abstract smoothness/strong-convexity constants, we work with explicit Hessian eigenvalue bounds that are (i) data-dependent and (ii) directly interpretable in terms of feature norms, empirical covariance, and (optionally) $\ell_2$ regularization.

\paragraph{Setup.}
Consider binary classification with data $\{(\mathbf{x}_i, y_i)\}_{i=1}^n$, where $\mathbf{x}_i \in \mathbb{R}^d$ and $y_i \in \{-1,+1\}$.
Let $\ell(z) = \log(1+\exp(-z))$ denote the scalar logistic loss and define the (possibly $\ell_2$-regularized) empirical objective
\begin{equation}
\label{eq:app-logistic-obj}
L(\theta)
\;=\;
\frac{1}{n}\sum_{i=1}^n \ell\!\big(y_i\langle \theta,\mathbf{x}_i\rangle\big)
\;+\;
\frac{\lambda}{2}\|\theta\|^2,
\qquad \lambda \ge 0 .
\end{equation}

\begin{assumption}[Data and logit regularity]
\label{assump:logistic-hessian}
We assume the following on a region of interest:
\begin{enumerate}[label=\textbf{A\arabic*.}, leftmargin=2.4em]
    \item \textbf{(Bounded features)} $\|\mathbf{x}_i\| \le B$ for all $i$.
    \item \textbf{(Empirical covariance lower bound)} $\frac{1}{n}\sum_{i=1}^n \mathbf{x}_i\mathbf{x}_i^\top \succeq \gamma I$ for some $\gamma>0$.
    \item \textbf{(Bounded logits on the region)} For all $\theta$ in the parameter region considered,
    \[
        \big|\langle \theta,\mathbf{x}_i\rangle\big| \le M
        \quad \text{for all } i\in\{1,\dots,n\}.
    \]
\end{enumerate}
\end{assumption}
\noindent Assumption above is standard and naturally holds under properly normalized data, constrained parameter space, or when regularization is present \cite{quinzan2023fast}.
Moreover, note that the setup above is flexible and encompasses several commonly studied models in machine learning, including logistic regression, one-layer neural networks and two-layer neural networks with fixed hidden layers.

\begin{lemma}[Scalar logistic curvature]
\label{lem:scalar-curv}
Let $\sigma(z)=1/(1+e^{-z})$. Then
\[
\ell''(z)=\sigma(z)\bigl(1-\sigma(z)\bigr)=\sigma(z)\sigma(-z),
\]
so $0<\ell''(z)\le \tfrac{1}{4}$ for all $z$. Moreover, on $|z|\le M$,
\[
\ell''(z)\ge c(M)
\;:=\;
\sigma(M)\sigma(-M)
\;>\;0.
\]
\end{lemma}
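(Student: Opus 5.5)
The plan is to differentiate $\ell(z)=\log(1+e^{-z})$ twice and rewrite the result in terms of $\sigma$. The first step is to compute
\[
\ell'(z)=\frac{-e^{-z}}{1+e^{-z}}=-\frac{1}{1+e^{z}}=-\sigma(-z).
\]
Next, using the standard identity $\sigma'(u)=\sigma(u)(1-\sigma(u))$ together with the chain rule, one gets
\[
\ell''(z)=\sigma'(-z)=\sigma(-z)\bigl(1-\sigma(-z)\bigr).
\]
Since $1-\sigma(-z)=\sigma(z)$, this is exactly $\sigma(z)\sigma(-z)=\sigma(z)(1-\sigma(z))$, which is the claimed formula. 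The identity for $\sigma'$ itself comes from $\sigma'(u)=e^{-u}/(1+e^{-u})^2=\sigma(u)\cdot\frac{e^{-u}}{1+e^{-u}}$.

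For the global bounds I will set $s=\sigma(z)\in(0,1)$. Positivity is then immediate, because both factors $s$ and $1-s$ are strictly positive. For the upper bound I apply AM--GM, $s(1-s)\le\bigl(\tfrac{s+(1-s)}{2}\bigr)^2=\tfrac14$, or equivalently note that the parabola $s(1-s)$ is maximized at $s=\tfrac12$.

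For the lower bound on $|z|\le M$, the key observation is that $\ell''$ is even and decreasing in $|z|$. Evenness is visible from the symmetric form $\sigma(z)\sigma(-z)$. For monotonicity, the map $s\mapsto s(1-s)$ is decreasing on $[\tfrac12,1)$, and $\sigma(|z|)$ increases from $\tfrac12$ as $|z|$ grows. Together these give $\ell''(z)=\sigma(|z|)\bigl(1-\sigma(|z|)\bigr)\ge\sigma(M)\bigl(1-\sigma(M)\bigr)=\sigma(M)\sigma(-M)=c(M)$. This quantity is strictly positive by the earlier positivity argument.

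There is no real obstacle in this proof. The only step requiring care is to state the monotonicity of $\ell''$ in $|z|$ cleanly, and evenness together with the parabola argument handles it.
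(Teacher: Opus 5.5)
Your proof is correct and follows the same route as the paper's, which simply asserts three facts: the direct calculation $\ell''(z)=\sigma(z)\sigma(-z)$, its maximum value $\tfrac14$ at $z=0$, and its minimum over $[-M,M]$ at the endpoints $\pm M$. You supply the details the paper omits (the chain-rule computation via $\ell'(z)=-\sigma(-z)$, the AM--GM bound, and evenness together with monotonicity of $s\mapsto s(1-s)$ on $[\tfrac12,1)$), and all of them check out.
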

\begin{proof}
A direct calculation gives $\ell''(z)=\sigma(z)(1-\sigma(z))=\sigma(z)\sigma(-z)$.
This function is maximized at $z=0$ with value $1/4$, is strictly positive for all $z$, and on the interval $[-M,M]$ it attains its minimum at the endpoints $\pm M$, yielding the stated $c(M)$.
\end{proof}

\begin{lemma}[Hessian eigenvalue bounds for logistic loss]
\label{lem:hessian-bounds-logistic}
Let $\nabla^2 L(\theta)$ denote the Hessian of the (regularized) logistic loss.
\begin{enumerate}[label=(\roman*)]
    \item \textbf{(Uniform upper bound).} Under assumption A1,
    \begin{equation}
    \label{eq:app-hess-upper}
        \nabla^2 L(\theta)
        \;\preceq\;
        \left(\frac{B^2}{4}+\lambda\right) I.
    \end{equation}
    \item \textbf{(Lower bound on the bounded-logit region).} Under assumptions A2 and A3,
    \begin{equation}
    \label{eq:app-hess-lower}
        \nabla^2 L(\theta)
        \;\succeq\;
        \left(c(M)\gamma+\lambda\right) I.
    \end{equation}
\end{enumerate}
\end{lemma}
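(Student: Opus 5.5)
We compute the Hessian of the objective in \eqref{eq:app-logistic-obj} explicitly and then bound the scalar curvature weights with Lemma~\ref{lem:scalar-curv}. Write $z_i := y_i\langle\theta,\mathbf{x}_i\rangle$. By the chain rule, $\nabla_\theta \ell(z_i) = \ell'(z_i)\,y_i\mathbf{x}_i$. Differentiating once more, and using $y_i^2=1$, gives
\[
\nabla^2 L(\theta) \;=\; \frac{1}{n}\sum_{i=1}^n \ell''(z_i)\,\mathbf{x}_i\mathbf{x}_i^\top \;+\; \lambda I .
\]
Every term $\ell''(z_i)\,\mathbf{x}_i\mathbf{x}_i^\top$ is positive semidefinite because $\ell''>0$. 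So both bounds reduce to sandwiching the weights $\ell''(z_i)$ and then controlling the resulting matrix $\frac1n\sum_i \mathbf{x}_i\mathbf{x}_i^\top$ in Loewner order.

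\textbf{Part (i).} Lemma~\ref{lem:scalar-curv} gives $\ell''(z_i)\le 1/4$, hence $\ell''(z_i)\,\mathbf{x}_i\mathbf{x}_i^\top \preceq \tfrac14\,\mathbf{x}_i\mathbf{x}_i^\top$. Since $\mathbf{x}_i\mathbf{x}_i^\top$ has rank one with sole nonzero eigenvalue $\|\mathbf{x}_i\|^2\le B^2$ (assumption A1), we have $\mathbf{x}_i\mathbf{x}_i^\top\preceq B^2 I$. Averaging over $i$ and adding $\lambda I$ yields \eqref{eq:app-hess-upper}. This holds for every $\theta$, since no logit bound is needed.

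\textbf{Part (ii).} Assumption A3 gives $|z_i| = |\langle\theta,\mathbf{x}_i\rangle|\le M$, because $|y_i|=1$. Lemma~\ref{lem:scalar-curv} then gives $\ell''(z_i)\ge c(M)$. Each summand therefore satisfies $\ell''(z_i)\,\mathbf{x}_i\mathbf{x}_i^\top \succeq c(M)\,\mathbf{x}_i\mathbf{x}_i^\top$; this holds for PSD rank-one matrices scaled by a larger nonnegative scalar. Averaging and applying assumption A2 gives $\frac1n\sum_i\ell''(z_i)\mathbf{x}_i\mathbf{x}_i^\top \succeq c(M)\gamma I$, and adding $\lambda I$ gives \eqref{eq:app-hess-lower}.

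The only step needing care is the termwise Loewner comparison. It is valid because each weight multiplies a PSD matrix, so monotonicity in the scalar weight is legitimate. One must also note that the label sign $y_i$ is harmless both in the Hessian (via $y_i^2=1$) and in the logit bound (via $|y_i|=1$). The argument is otherwise routine.
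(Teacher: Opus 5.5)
Your proof is correct and matches the paper's argument essentially line for line. Like the paper, you write the Hessian as $\frac{1}{n}\sum_i \ell''(y_i\langle\theta,\mathbf{x}_i\rangle)\,\mathbf{x}_i\mathbf{x}_i^\top+\lambda I$, sandwich the weights between $c(M)$ and $1/4$ via Lemma~\ref{lem:scalar-curv}, and finish with $\mathbf{x}_i\mathbf{x}_i^\top\preceq B^2 I$ (A1) and the covariance bound (A2); your remarks on $y_i^2=1$ and $|y_i|=1$ simply make explicit two points the paper leaves implicit.
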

\begin{proof}
Recall that
\[
\nabla^2 L(\theta)
=
\frac{1}{n}\sum_{i=1}^n
\ell''\!\bigl(y_i\langle \theta,\mathbf{x}_i\rangle\bigr)\,
\mathbf{x}_i\mathbf{x}_i^\top
\;+\;
\lambda I .
\]
\textit{(i) Upper bound.}
By Lemma~\ref{lem:scalar-curv}, $\ell''(\cdot)\le 1/4$, hence
\[
\frac{1}{n}\sum_{i=1}^n
\ell''(\cdot)\,\mathbf{x}_i\mathbf{x}_i^\top
\;\preceq\;
\frac{1}{4n}\sum_{i=1}^n \mathbf{x}_i\mathbf{x}_i^\top.
\]
Using $\mathbf{x}_i\mathbf{x}_i^\top \preceq \|\mathbf{x}_i\|^2 I \preceq B^2 I$ from assumption A1 yields
$\frac{1}{4n}\sum_{i=1}^n \mathbf{x}_i\mathbf{x}_i^\top \preceq \frac{B^2}{4}I$,
and adding $\lambda I$ gives Eq. \eqref{eq:app-hess-upper}.

\noindent \textit{(ii) Lower bound.}
On the region where $|\langle \theta,\mathbf{x}_i\rangle|\le M$ for all $i$, Lemma~\ref{lem:scalar-curv} gives
$\ell''(y_i\langle \theta,\mathbf{x}_i\rangle)\ge c(M)$.
Therefore,
\[
\frac{1}{n}\sum_{i=1}^n
\ell''(\cdot)\,\mathbf{x}_i\mathbf{x}_i^\top
\;\succeq\;
c(M)\cdot \frac{1}{n}\sum_{i=1}^n \mathbf{x}_i\mathbf{x}_i^\top
\;\succeq\;
c(M)\gamma I,
\]
where the last step uses assumption A2. Adding $\lambda I$ yields Eq. \eqref{eq:app-hess-lower}.
\end{proof}

\paragraph{Notation.}
For convenience, we will refer to the resulting spectral bounds as
\begin{equation}
\label{eq:app-lambda-minmax}
\lambda_{\max}
\;:=\;
\frac{B^2}{4}+\lambda,
\qquad
\lambda_{\min}
\;:=\;
c(M)\gamma+\lambda.
\end{equation}
In subsequent proofs, $\lambda_{\max}$ plays the role of a smoothness constant and $\lambda_{\min}$ provides a (restricted) strong convexity modulus on the bounded-logit region.

\subsection{Proof of Lemma \ref{lem:alignment_sensitivity} (Extended)}
\label{app:proof-lemma2.1}

We formalize how \textit{alignment loss} (disagreement with the human) relates to the model's \textit{prediction loss} on the alignment region $\mathcal{D}_a$. This relationship is used in the main analysis to translate changes in predictive performance on $\mathcal{D}_a$ into changes in the reliance probability $r = 1 - L_h(\mathcal{D}_a,m)$ (and hence downstream team performance). For simplicity, we first present the cleanest closed form under binary labels and a mild conditional-independence condition, and then briefly discuss what changes in multiclass or correlated-error settings.

\paragraph{Binary setup.}
Assume $\mathcal{Y}=\{0,1\}$. Let $X$ denote a random instance conditioned on $X\in\mathcal{D}_a$. Let $Y\in\mathcal{Y}$ be the true label, $Y_H\in\mathcal{Y}$ the human decision, and $M(X)\in\mathcal{Y}$ the model prediction. In the high-confidence region $\mathcal{D}_a$, define
\begin{align*}
    \alpha &:= \mathbb{P}[Y_H = Y] &\text{(human accuracy)},\\
    L(\mathcal{D}_a,m) &:= \mathbb{P}[M(X)\neq Y] &\text{(model prediction loss)},\\
    L_h(\mathcal{D}_a,m) &:= \mathbb{P}[M(X)\neq Y_H] &\text{(alignment loss)}.
\end{align*}
All probabilities in this subsection are implicitly conditioned on $X\in\mathcal{D}_a$.

\begin{assumption}[Conditional independence of errors given the true label]
\label{assump:cond-indep}
Once the true label $Y = y$ is known, whether the human is correct or incorrect conveys no additional information about whether the model will err:
\begin{align}
    &\mathbb{P}[M(X) \neq y \mid Y_H = y,\, Y = y] \nonumber \\
    &\hspace{2em}= \mathbb{P}[M(X) \neq y \mid Y = y],
    \quad \forall\, y \in \{0,1\}.
\end{align}
\end{assumption}
This assumption is particularly plausible in $\mathcal{D}_a$, where instances are easy and human mistakes are typically sparse and noise-like (rather than systematically shared with model errors). Similar independence assumptions are common in prior human--AI works (e.g., \cite{bansal2021most}).

We now prove the decomposition stated in Lemma~\ref{lem:alignment_sensitivity} in binary setting under conditional independence assumption.

\begin{proof}[Proof of Lemma~\ref{lem:alignment_sensitivity}]
We condition on whether the human is correct:
\begin{align}
L_h(\mathcal{D}_a,m)
&= \mathbb{P}[M(X)\neq Y_H] \nonumber \\
&= \mathbb{P}[M(X)\neq Y_H \mid Y_H=Y]\cdot \mathbb{P}[Y_H=Y] \nonumber \\
&\quad + \mathbb{P}[M(X)\neq Y_H \mid Y_H\neq Y]\cdot \mathbb{P}[Y_H\neq Y]. \label{eq:lh-split}
\end{align}
By definition, $\mathbb{P}[Y_H=Y]=\alpha$ and $\mathbb{P}[Y_H\neq Y]=1-\alpha$.

\noindent \textit{Term 1: conditioning on $Y_H=Y$.}
When $Y_H=Y$, the event $\{M(X)\neq Y_H\}$ is exactly $\{M(X)\neq Y\}$, hence
\begin{align}
\mathbb{P}[M(X)\neq Y_H \mid Y_H=Y]
&= \mathbb{P}[M(X)\neq Y \mid Y_H=Y] \nonumber \\
&= \mathbb{P}[M(X)\neq Y] \nonumber\\
&= L(\mathcal{D}_a,m), \label{eq:term1}
\end{align}
where the second equality follows from Assumption~\ref{assump:cond-indep} (applied and then marginalized over $Y$).

\noindent \textit{Term 2: conditioning on $Y_H\neq Y$.}
Under binary labels, $Y_H\neq Y$ implies $Y_H = 1-Y$, so the event $\{M(X)\neq Y_H\}$ is equivalent to $\{M(X)=Y\}$. Therefore,
\begin{align}
\mathbb{P}[M(X)\neq Y_H \mid Y_H\neq Y]
&= \mathbb{P}[M(X)=Y \mid Y_H\neq Y] \nonumber \\
&= 1 - \mathbb{P}[M(X)\neq Y \mid Y_H\neq Y] \nonumber \\
&= 1 - \mathbb{P}[M(X)\neq Y] \nonumber\\
&= 1 - L(\mathcal{D}_a,m), \label{eq:term2}
\end{align}
where the penultimate step uses Assumption~\ref{assump:cond-indep}.

\noindent Substituting Eqs. \eqref{eq:term1}--\eqref{eq:term2} into Eq. \eqref{eq:lh-split} yields the decomposition
\begin{equation}
\label{eq:alignment-loss-formula-app}
L_h(\mathcal{D}_a,m)
\;=\;
\alpha \, L(\mathcal{D}_a,m)
\;+\;
(1-\alpha)\,\bigl[1 - L(\mathcal{D}_a,m)\bigr].
\end{equation}
Taking the derivative with respect to $L(\mathcal{D}_a,m)$ gives
\[
\frac{\partial L_h(\mathcal{D}_a,m)}{\partial L(\mathcal{D}_a,m)}
= 2\alpha - 1.
\]
\end{proof}

\paragraph{Extension to multiclass setting.}
For $|\mathcal{Y}|>2$, the identity used above,
\[
Y_H\neq Y \;\Longrightarrow\; \{M(X)\neq Y_H\}\equiv \{M(X)=Y\},
\]
no longer holds. Conditioning on whether the human is correct still yields
\begin{align}
L_h(\mathcal{D}_a,m)
&= \alpha\cdot \mathbb{P}[M(X)\neq Y \mid Y_H=Y] \nonumber\\
&\quad + (1-\alpha)\cdot \mathbb{P}[M(X)\neq Y_H \mid Y_H\neq Y], \label{eq:multiclass-split}
\end{align}
but the second conditional probability depends on \textit{which} wrong label $Y_H$ takes and how often the model matches that wrong label. Thus, without introducing additional confusion-matrix parameters describing the joint distribution of $(M(X),Y_H)$ given $Y$, there is no binary-style closed form.

\paragraph{Extension to correlated errors setting.}
If the human and model share systematic failure modes (e.g., particular subpopulations or visual artifacts), then
$\mathbb{P}[M(X)\neq y \mid Y_H=y,\, Y=y]$ can differ substantially from
$\mathbb{P}[M(X)\neq y \mid Y=y]$.
In this case, the decomposition in Eq. \eqref{eq:alignment-loss-formula-app} generally does not hold.
One can still express $L_h(\mathcal{D}_a,m)$ by conditioning on $(Y,Y_H)$, but additional dependence parameters are required to relate it to $L(\mathcal{D}_a,m)$.

\subsection{Proof of Theorem \ref{thm:local-tradeoff-formal}}
\label{app:proof-thm2.2}

We make precise the local \textit{complementarity--alignment tradeoff}: when taking an infinitesimal steepest-descent step for the alignment objective, the complementarity objective can increase, and the \textit{unit tradeoff} is controlled by (i) the angle between the two gradients and (ii) the relative distances to the two optima, scaled by curvature (Hessian eigenvalue) bounds.

\paragraph{Setup.}
Recall that $L_a(\theta)$ and $L_c(\theta)$ denote the (regularized) empirical logistic losses restricted to $\mathcal{D}_a$ and $\mathcal{D}_c$, respectively. Let
\[
\theta_{m_a}^* := \arg\min_{\theta} L_a(\theta),
\qquad
\theta_{m_c}^* := \arg\min_{\theta} L_c(\theta),
\]
and define distances
\[
d_a := \|\theta-\theta_{m_a}^*\|,
\qquad
d_c := \|\theta-\theta_{m_c}^*\|.
\]
Let $\phi(\theta)$ be the angle between $\nabla L_a(\theta)$ and $\nabla L_c(\theta)$, i.e.,
\[
\cos\phi(\theta)
:=
\frac{\nabla L_a(\theta)^\top \nabla L_c(\theta)}
{\|\nabla L_a(\theta)\|\,\|\nabla L_c(\theta)\|}.
\]
Let $\lambda_{\min}^a,\lambda_{\max}^a$ (resp.\ $\lambda_{\min}^c,\lambda_{\max}^c$) denote the Hessian eigenvalue lower/upper bounds for $L_a$ (resp.\ $L_c$) on the region of interest, as provided by Lemma~\ref{lem:hessian-bounds-logistic}.
Finally, when translating between alignment loss and prediction loss in $\mathcal{D}_a$, Lemma~\ref{lem:alignment_sensitivity} yields the scalar factor $\kappa=2\alpha-1$.

\paragraph{Step 1: Infinitesimal alignment-descent step.}
Fix $\theta$ and consider the steepest descent direction for $L_a$:
\[
\mathbf{d} := -\varepsilon \nabla L_a(\theta),
\qquad \varepsilon>0.
\]
Define $\Delta L_a(\varepsilon):=L_a(\theta+\mathbf{d})-L_a(\theta)$ and
$\Delta L_c(\varepsilon):=L_c(\theta+\mathbf{d})-L_c(\theta)$.

\paragraph{Step 2: Lower bound $\Delta L_c(\varepsilon)$.}
By the second-order Taylor lower bound using $\nabla^2 L_c(\cdot)\succeq \lambda_{\min}^c I$ (Lemma~\ref{lem:hessian-bounds-logistic}),
\begin{align}
L_c(\theta+\mathbf{d})
&\ge L_c(\theta) + \nabla L_c(\theta)^\top \mathbf{d}
    + \frac{\lambda_{\min}^c}{2}\|\mathbf{d}\|^2 \nonumber \\
\Rightarrow
\Delta L_c(\varepsilon)
&\ge -\varepsilon\,\nabla L_c(\theta)^\top \nabla L_a(\theta)
    + \frac{\lambda_{\min}^c}{2}\varepsilon^2 \|\nabla L_a(\theta)\|^2.
\label{eq:thm22-numerator}
\end{align}
Writing the inner product via the angle $\phi(\theta)$ gives
\begin{align}
\Delta L_c(\varepsilon)
&\ge -\varepsilon\,\|\nabla L_c(\theta)\|\,\|\nabla L_a(\theta)\|\cos\phi(\theta) \nonumber\\
    &\qquad+ \frac{\lambda_{\min}^c}{2}\varepsilon^2 \|\nabla L_a(\theta)\|^2.
\label{eq:thm22-numerator-angle}
\end{align}

\paragraph{Step 3: Relate $\|\nabla L_c(\theta)\|$ to $d_c$.}
Since $L_c$ is $\lambda_{\min}^c$-strongly convex on the region,
\begin{align}
\big(\nabla L_c(\theta)-\nabla L_c(\theta_{m_c}^*)\big)^\top(\theta-\theta_{m_c}^*)
&\ge \lambda_{\min}^c \|\theta-\theta_{m_c}^*\|^2. \nonumber
\end{align}
Using $\nabla L_c(\theta_{m_c}^*)=\mathbf{0}$ and Cauchy--Schwarz yields
\begin{equation}
\label{eq:thm22-gradLc-lower}
\|\nabla L_c(\theta)\|
\;\ge\;
\lambda_{\min}^c \|\theta-\theta_{m_c}^*\|
\;=\;
\lambda_{\min}^c d_c.
\end{equation}

\paragraph{Step 4: Upper bound $-\Delta L_a(\varepsilon)$ and relate $\|\nabla L_a(\theta)\|$ to $d_a$.}
To bound the denominator, note that the Taylor \textit{upper} bound with
$\nabla^2 L_a(\cdot)\preceq \lambda_{\max}^a I$ gives
\begin{align}
L_a(\theta+\mathbf{d})
&\le L_a(\theta) + \nabla L_a(\theta)^\top \mathbf{d}
    + \frac{\lambda_{\max}^a}{2}\|\mathbf{d}\|^2 \nonumber \\
\Rightarrow
-\Delta L_a(\varepsilon)
&\ge \varepsilon\|\nabla L_a(\theta)\|^2
   - \frac{\lambda_{\max}^a}{2}\varepsilon^2 \|\nabla L_a(\theta)\|^2.
\label{eq:thm22-denom}
\end{align}
For the infinitesimal ratio as $\varepsilon\downarrow 0$, the leading term is
$\varepsilon\|\nabla L_a(\theta)\|^2$.

\noindent We also need an upper bound on $\|\nabla L_a(\theta)\|$ in terms of $d_a$.
By $\lambda_{\max}^a$-smoothness (Lipschitz gradient),
\[
\|\nabla L_a(\theta)-\nabla L_a(\theta_{m_a}^*)\|
\le \lambda_{\max}^a \|\theta-\theta_{m_a}^*\|.
\]
Since $\nabla L_a(\theta_{m_a}^*)=\mathbf{0}$, we obtain
\begin{equation}
\label{eq:thm22-gradLa-upper}
\|\nabla L_a(\theta)\|
\;\le\;
\lambda_{\max}^a \|\theta-\theta_{m_a}^*\|
\;=\;
\lambda_{\max}^a d_a.
\end{equation}

\paragraph{Step 5: Take the ratio and send $\varepsilon\to 0^+$.}
Combining Eq. \eqref{eq:thm22-numerator-angle} and the leading-order denominator from Eq. \eqref{eq:thm22-denom}, we have
\begin{align}
\frac{\Delta L_c(\varepsilon)}{-\Delta L_a(\varepsilon)}
&\geq
\frac{
    \begin{aligned}
    &-\varepsilon\, \|\nabla L_c(\theta)\|\, \|\nabla L_a(\theta)\| \cos\phi \\
    &\quad + \dfrac{\lambda_{\min}^c}{2}\, \varepsilon^2 \|\nabla L_a(\theta)\|^2
    \end{aligned}
}{
    \varepsilon\, \|\nabla L_a(\theta)\|^2
} \nonumber \\
&=
- \frac{\|\nabla L_c(\theta)\|}{\|\nabla L_a(\theta)\|} \cos\phi
+ \frac{\lambda_{\min}^c}{2} \varepsilon
\label{eq:thm22-main-ratio}
\end{align}
Letting $\varepsilon\to 0^+$ yields the local unit tradeoff
\[
\mathcal{T}(\theta)
:=
\lim_{\varepsilon\to 0^+}\frac{\Delta L_c(\varepsilon)}{-\Delta L_a(\varepsilon)}
\;\ge\;
-\frac{\|\nabla L_c(\theta)\|}{\|\nabla L_a(\theta)\|}\cos\phi(\theta).
\]
Substituting the bounds Eq. \eqref{eq:thm22-gradLc-lower} and Eq. \eqref{eq:thm22-gradLa-upper} gives
\begin{equation}
\label{eq:thm22-final}
\mathcal{T}(\theta)
\;\ge\;
\frac{\lambda_{\min}^c}{\lambda_{\max}^a}\,
\frac{d_c}{d_a}\,
\bigl(-\cos\phi(\theta)\bigr).
\end{equation}

\paragraph{Optional $\kappa$ factor (alignment loss vs.\ prediction loss).}
The bound Eq. \eqref{eq:thm22-final} is stated in terms of the gradients of $L_a(\theta)$.
If $L_a$ is instantiated as the \textit{alignment loss} $L_h(\mathcal{D}_a,m)$ rather than the model prediction loss $L(\mathcal{D}_a,m)$, Lemma~\ref{lem:alignment_sensitivity} provides the scalar conversion
$\frac{\partial L_h(\mathcal{D}_a,m)}{\partial L(\mathcal{D}_a,m)}=\kappa$.
Consequently, the tradeoff per unit \textit{decrease in alignment loss} introduces a factor $1/\kappa$.

\paragraph{Divergence as $d_a\to 0$.}
If $\theta\to\theta_{m_a}^*$, then $d_a\to 0$. Unless the two optima coincide (so $d_c\to 0$ as well), $d_c$ remains bounded away from $0$ in a neighborhood of $\theta_{m_a}^*$. If additionally the gradients remain sufficiently opposed near $\theta_{m_a}^*$, i.e.,
\[
-\cos\phi(\theta)\;\ge\; c_0>0
\quad \text{for all $\theta$ in a neighborhood of $\theta_{m_a}^*$},
\]
then Eq. \eqref{eq:thm22-final} implies
\[
\mathcal{T}(\theta)
\;\ge\;
\frac{\lambda_{\min}^c}{\lambda_{\max}^a}\,\frac{d_c}{d_a}\,c_0
\;=\;
\Omega\!\left(\frac{1}{d_a}\right),
\]
so $\mathcal{T}(\theta)\to +\infty$ as $d_a\to 0$.

\subsection{Proof of Theorem \ref{thm:RRS}}
\label{app:rrs-proof}

We prove that the Rational Routing Shortcut (RRS) achieves expected CGPR team accuracy within $\varepsilon$ of the oracle routing strategy under the conditions of Theorem~\ref{thm:RRS}. The key point is that CGPR behaves identically on the alignment region $\mathcal{D}_a$, so it suffices to compare the two strategies on $\mathcal{D}_c$.

\paragraph{Setup.}
For any instance $\mathbf{x}$, define the conditional correctness probabilities
\begin{align*}
    p_c(\mathbf{x}) &:= \mathbb{P}\!\left[m_c(\mathbf{x}) = y \mid \mathbf{x}\right],\\
    p_a(\mathbf{x}) &:= \mathbb{P}\!\left[m_a(\mathbf{x}) = y \mid \mathbf{x}\right],
\end{align*}
and the reported confidences
\begin{align*}
    \mathcal{C}^c(\mathbf{x}) &:= \text{confidence reported by } m_c \text{ on } \mathbf{x},\\
    \mathcal{C}^a(\mathbf{x}) &:= \text{confidence reported by } m_a \text{ on } \mathbf{x}.
\end{align*}
Let $m_{\mathrm{oracle}}(\mathbf{x})$ denote the oracle specialist (Eq.~\ref{eq:ensemble-oracle}), and $m_{\mathrm{RRS}}(\mathbf{x})$ denote the RRS-selected specialist (Eq.~\ref{eq:RRS_definition}).
Finally, for $j\in\{a,c\}$ let $r_j := 1 - L_h(\mathcal{D}_a,m_j)$ denote the CGPR reliance probability when model $m_j$ is selected in $\mathcal{D}_c$.

\paragraph{Step 1: Reduce to comparing performance on $\mathcal{D}_c$.}
Under CGPR, the expected team accuracy $\mathrm{Accuracy}_S$ for any routing strategy $S\in\{\mathrm{oracle},\mathrm{RRS}\}$ can be written as
\begin{align}
&
\underbrace{
\mathbb{E}_{\mathbf{x}\in\mathcal{D}_a}\!\left[\mathbf{1}\{h(\mathbf{x})=y\}\right]
}_{\text{(always human in alignment region)}}
\nonumber\\
&\quad +
\underbrace{
\mathbb{E}_{\mathbf{x}\in\mathcal{D}_c}\!\left[
r_S\cdot \mathbf{1}\{m_S(\mathbf{x})=y\}
+(1-r_S)\cdot \mathbf{1}\{h(\mathbf{x})=y\}
\right]
}_{\text{(CGPR mixture in complementarity region)}} .
\label{eq:rrs-acc-decomp}
\end{align}
The $\mathcal{D}_a$ term is identical for both strategies (CGPR always uses the human in $\mathcal{D}_a$), so it suffices to compare the $\mathcal{D}_c$ term.

\paragraph{Step 2: Pointwise comparison of oracle vs.\ RRS on $\mathcal{D}_c$.}
For $\mathbf{x}\in\mathcal{D}_c$, the oracle always selects $m_c$, hence
\[
p_{\mathrm{Oracle}}(\mathbf{x}) := p_c(\mathbf{x}).
\]
RRS selects $m_a$ when $\mathcal{C}^a(\mathbf{x})\ge \mathcal{C}^c(\mathbf{x})$ and otherwise selects $m_c$, so define
\[
p_{\mathrm{RRS}}(\mathbf{x})
:=
\begin{cases}
p_a(\mathbf{x}), & \text{if } \mathcal{C}^a(\mathbf{x}) \ge \mathcal{C}^c(\mathbf{x}),\\
p_c(\mathbf{x}), & \text{otherwise}.
\end{cases}
\]
If $\mathcal{C}^c(\mathbf{x}) > \mathcal{C}^a(\mathbf{x})$, then RRS selects $m_c$ and
$p_{\mathrm{RRS}}(\mathbf{x})=p_{\mathrm{Oracle}}(\mathbf{x})$.
If instead $\mathcal{C}^a(\mathbf{x}) \ge \mathcal{C}^c(\mathbf{x})$, then by the bounded sub-optimality condition in Theorem~\ref{thm:RRS}(iii),
\[
p_a(\mathbf{x}) \ge p_c(\mathbf{x}) - \varepsilon,
\]
so $p_{\mathrm{RRS}}(\mathbf{x}) \ge p_{\mathrm{Oracle}}(\mathbf{x})-\varepsilon$.
Combining both cases, for all $\mathbf{x}\in\mathcal{D}_c$,
\begin{equation}
\label{eq:per-instance-gap}
p_{\mathrm{RRS}}(\mathbf{x})
\;\ge\;
p_{\mathrm{Oracle}}(\mathbf{x}) - \varepsilon.
\end{equation}
\noindent Note: the calibration condition in Theorem~\ref{thm:RRS}(i) is not needed for this inequality, and it serves to interpret $\mathcal{C}^a,\mathcal{C}^c$ as approximate correctness estimates.

\paragraph{Step 3: Reliance factors coincide ($r_{\mathrm{RRS}}=r_{\mathrm{Oracle}}$).}
By Theorem~\ref{thm:RRS}(ii), if $\mathbf{x}\in\mathcal{D}_a$ then $\mathcal{C}^a(\mathbf{x})\ge \mathcal{C}^c(\mathbf{x})$, so RRS selects $m_a$ on $\mathcal{D}_a$, matching the oracle's choice on $\mathcal{D}_a$.
Therefore,
\[
L_h(\mathcal{D}_a,m_{\mathrm{RRS}})
=
L_h(\mathcal{D}_a,m_a)
=
L_h(\mathcal{D}_a,m_{\mathrm{oracle}}),
\]
and hence $r_{\mathrm{RRS}}=r_{\mathrm{Oracle}}$. Denote their common value by $r$.

\paragraph{Step 4: Bound the expected accuracy gap on $\mathcal{D}_c$.}
Taking conditional expectation of the $\mathcal{D}_c$ term in Eq. \eqref{eq:rrs-acc-decomp} yields
\begin{align}
&\mathrm{Accuracy}_{\mathrm{Oracle}} - \mathrm{Accuracy}_{\mathrm{RRS}} \nonumber\\
&=
\mathbb{E}_{\mathbf{x}\in\mathcal{D}_c}\!\left[
r\cdot\Big(p_{\mathrm{Oracle}}(\mathbf{x})-p_{\mathrm{RRS}}(\mathbf{x})\Big)
\right].
\label{eq:rrs-gap-exact}
\end{align}
By Eq. \eqref{eq:per-instance-gap},
$p_{\mathrm{Oracle}}(\mathbf{x})-p_{\mathrm{RRS}}(\mathbf{x})\le \varepsilon$ pointwise on $\mathcal{D}_c$, so
\[
0 \le r\cdot\Big(p_{\mathrm{Oracle}}(\mathbf{x})-p_{\mathrm{RRS}}(\mathbf{x})\Big)
\le r\,\varepsilon \le \varepsilon.
\]
Taking expectation in Eq. \eqref{eq:rrs-gap-exact} gives
\[
\mathrm{Accuracy}_{\mathrm{Oracle}} - \mathrm{Accuracy}_{\mathrm{RRS}}
\le \varepsilon.
\]
Rearranging yields
\[
\mathrm{Accuracy}_{\mathrm{RRS}}
\;\ge\;
\mathrm{Accuracy}_{\mathrm{Oracle}} - \varepsilon,
\]
which completes the proof.
\hfill$\qed$

\subsection{Proof of Theorem \ref{thm:adaptive-gap-strong-convexity}}
\label{app:proof-thm3.6}

We lower bound the advantage of the adaptive ensemble over the best single model by combining (i) strong convexity of the regional prediction losses and (ii) a weighted ``two-centers'' quadratic minimization. The only subtlety is that the \textit{alignment} objective in $\mathcal{D}_a$ is an agreement-with-human loss, which we incorporate via the $\kappa$ factor from Lemma~\ref{lem:alignment_sensitivity}.

\paragraph{Step 0: Reduce to prediction losses and incorporate $\kappa$.}
In the alignment region $\mathcal{D}_a$, the alignment objective is
$L_a(\theta)=L_h(\mathcal{D}_a,m(\cdot;\theta))$.
By Lemma~\ref{lem:alignment_sensitivity}, for binary labels and under its conditions,
\begin{equation}
\label{eq:adapt-kappa-affine}
L_a(\theta)
\;=\;
(1-\alpha) \;+\; \kappa\,\widetilde L_a(\theta),
\end{equation}
where $\widetilde L_a(\theta):=L(\mathcal{D}_a,m(\cdot;\theta))$ and $\kappa:=2\alpha-1\in(0,1]$.
Consider the full-population single-model surrogate objective
\[
g(\theta) := p\,L_a(\theta) + (1-p)\,L_c(\theta),
\qquad p:=\mathbb{P}[\mathbf{x}\in\mathcal{D}_a].
\]
Substituting Eq. \eqref{eq:adapt-kappa-affine} gives
\begin{equation}
\label{eq:adapt-g-tilde}
g(\theta)
=
p(1-\alpha)
+
\underbrace{\Big(p\kappa\,\widetilde L_a(\theta) + (1-p)\,L_c(\theta)\Big)}_{=:~\widetilde g(\theta)}.
\end{equation}
Since $p(1-\alpha)$ is constant in $\theta$, minimizing $g(\theta)$ is equivalent to minimizing $\widetilde g(\theta)$, and
\begin{equation}
\label{eq:adapt-gap-tilde}
\min_{\theta} g(\theta) - L_{\mathrm{adapt}}
\;=\;
\min_{\theta} \widetilde g(\theta) - \widetilde L_{\mathrm{adapt}},
\end{equation}
where
\(
\widetilde L_{\mathrm{adapt}}
:=
p\kappa\,\widetilde L_a(\theta_{m_a}^*)
+
(1-p)\,L_c(\theta_{m_c}^*).
\)
Moreover, Eq. \eqref{eq:adapt-kappa-affine} is an affine transform with positive slope $\kappa$, so $L_a(\theta)$ and $\widetilde L_a(\theta)$ share the same minimizer, resulting in $\theta_{m_a}^*=\arg\min_\theta \widetilde L_a(\theta)$.

\paragraph{Step 1: Strong-convexity lower bounds.}
By assumption, the underlying \textit{prediction} losses in the two regions are $\mu$-strongly convex in $\theta$. In particular,
for all $\theta$,
\begin{align}
\widetilde L_a(\theta)
&\ge
\widetilde L_a(\theta_{m_a}^*) + \frac{\mu}{2}\,\|\theta-\theta_{m_a}^*\|^2,
\label{eq:adapt-sc-a}\\
L_c(\theta)
&\ge
L_c(\theta_{m_c}^*) + \frac{\mu}{2}\,\|\theta-\theta_{m_c}^*\|^2.
\label{eq:adapt-sc-c}
\end{align}
Multiplying Eq. \eqref{eq:adapt-sc-a} by $p\kappa$ and Eq. \eqref{eq:adapt-sc-c} by $(1-p)$ and summing yields, for all $\theta$,
\begin{align}
\widetilde g(\theta)
&\ge
\widetilde L_{\mathrm{adapt}}
+
\frac{\mu}{2}\Big(
p\kappa\,\|\theta-\theta_{m_a}^*\|^2
+
(1-p)\,\|\theta-\theta_{m_c}^*\|^2
\Big).
\label{eq:adapt-g-lower}
\end{align}

\paragraph{Step 2: Minimize the weighted two-center quadratic.}
Let $a,b\in\mathbb{R}^d$ and $w_1,w_2>0$. A standard identity gives
\begin{equation}
\label{eq:adapt-quad-identity}
\min_{\theta}
\Big(
w_1\|\theta-a\|^2 + w_2\|\theta-b\|^2
\Big)
=
\frac{w_1w_2}{w_1+w_2}\,\|a-b\|^2,
\end{equation}
attained uniquely at $\widehat\theta=(w_1 a+w_2 b)/(w_1+w_2)$.

\noindent Applying Eq. \eqref{eq:adapt-quad-identity} with
$a=\theta_{m_a}^*$, $b=\theta_{m_c}^*$, $w_1=p\kappa$, and $w_2=1-p$,
and writing $D:=\|\theta_{m_a}^*-\theta_{m_c}^*\|$, we obtain
\begin{equation}
\label{eq:adapt-quad-min}
\min_{\theta}
\Big(
p\kappa\,\|\theta-\theta_{m_a}^*\|^2
+
(1-p)\,\|\theta-\theta_{m_c}^*\|^2
\Big)
=
\frac{p\kappa(1-p)}{(1-p)+p\kappa}\,D^2.
\end{equation}
Since $\kappa\in(0,1]$ and $p\in[0,1]$, we have $(1-p)+p\kappa \le 1$, and therefore
\begin{equation}
\label{eq:adapt-coeff-lower}
\frac{p\kappa(1-p)}{(1-p)+p\kappa}\,D^2
\;\ge\;
p\kappa(1-p)\,D^2.
\end{equation}
\noindent\textit{Remark (Tightness).}
The exact coefficient in Eq. \eqref{eq:adapt-quad-min} can be written as
$p\kappa(1-p)\cdot \frac{1}{(1-p)+p\kappa}$, and the multiplicative factor
$\frac{1}{(1-p)+p\kappa}\ge 1$ quantifies how much larger the tight minimum can be relative to the simpler lower bound in Eq. \eqref{eq:adapt-coeff-lower}. We use Eq. \eqref{eq:adapt-coeff-lower} for a clean closed form.

\paragraph{Step 3: Conclude the performance gain bound.}
Taking $\min_{\theta}$ on both sides of Eq. \eqref{eq:adapt-g-lower} and using Eqs. \eqref{eq:adapt-quad-min}--\eqref{eq:adapt-coeff-lower} yields
\begin{align}
\min_{\theta}\widetilde g(\theta)
&\ge
\widetilde L_{\mathrm{adapt}}
+
\frac{\mu}{2}\,p\kappa(1-p)\,D^2.
\label{eq:adapt-min-tildeg}
\end{align}
Finally, by Eq. \eqref{eq:adapt-gap-tilde},
\begin{align*}
\Gamma_{\mathrm{team}}
&:=
L_{\mathrm{single}}^* - L_{\mathrm{adapt}}
=
\min_{\theta} \widetilde g(\theta) - \widetilde L_{\mathrm{adapt}} \\
&\ge
\frac{\mu}{2}\,p\kappa(1-p)\,D^2
=
\frac{\kappa\mu}{2}\,p(1-p)\,D^2.
\end{align*}
\hfill$\qed$

\subsection{Proof of Proposition \ref{prop:comp-weight}}
\label{app:proof-comp-weight}

The instance weight $w_i^c$ is the \textit{expected indicator} that $\mathbf{x}_i$ is routed to the complementarity region $\mathcal{D}_c$ under a randomized human confidence threshold.
Recall that $\mathbf{x}_i\in\mathcal{D}_c$ if and only if $\mathcal{C}_i^h \le \tau$.
Assume $\tau$ is supported on $[0,1]$ with CDF $F_T$ and density $f_T$.
Then
\begin{align*}
w_i^c
&:= \mathbb{E}_\tau\!\left[\mathbf{1}\{\mathbf{x}_i\in\mathcal{D}_c\}\right]
= \mathbb{E}_\tau\!\left[\mathbf{1}\{\mathcal{C}_i^h \le \tau\}\right] \\
&= \int_{0}^{1} \mathbf{1}\{\mathcal{C}_i^h \le \tau\}\, f_T(\tau)\, d\tau
= \int_{\mathcal{C}_i^h}^{1} f_T(\tau)\, d\tau \\
&= 1 - F_T(\mathcal{C}_i^h),
\end{align*}
which proves the claim.
\hfill$\qed$

\subsection{Proof of Corollary \ref{cor:gain-uncertainty}}
\label{app:proof-cor-3.7}

We analyze how uncertainty in region membership at deployment degrades the performance gain of adaptive AI.
The argument proceeds in three steps:
(i) express the adaptive loss under misrouting,
(ii) bound the misrouting probability via entropy,
and (iii) derive a rigorous additive degradation bound.
We then show how the multiplicative form stated in the main text follows under an additional assumption.

\paragraph{Setup.}
Let
$p(\mathbf{x}) := \mathbb{P}[\mathbf{x}\in\mathcal{D}_a \mid \mathbf{x}] \in [0,1]$
denote the (possibly learned) posterior probability that instance $\mathbf{x}$ belongs to the alignment region, and
$\mathbb{P}[\mathbf{x}\in\mathcal{D}_c \mid \mathbf{x}] = 1 - p(\mathbf{x})$.
Define the average instance-level binary entropy (in nats)
\begin{equation}
\label{eq:uncertainty-entropy}
\mathcal{H}
:=
\mathbb{E}_{\mathbf{x}\sim\mathcal{D}}
\Big[
-\,p(\mathbf{x})\log p(\mathbf{x})
- \big(1-p(\mathbf{x})\big)\log\big(1-p(\mathbf{x})\big)
\Big].
\end{equation}
We consider hard routing by maximum posterior:
$\mathbf{x}$ is assigned to $\mathcal{D}_a$ if $p(\mathbf{x}) \ge \tfrac{1}{2}$ and to $\mathcal{D}_c$ otherwise.
The per-instance misrouting probability is therefore
\[
\rho(\mathbf{x}) := \min\{p(\mathbf{x}),\,1-p(\mathbf{x})\},
\qquad
\bar{\rho} := \mathbb{E}_{\mathbf{x}\sim\mathcal{D}}[\rho(\mathbf{x})].
\]

\paragraph{Adaptive team loss under uncertainty.}
Let $m_a$ and $m_c$ denote the specialists trained for $\mathcal{D}_a$ and $\mathcal{D}_c$, respectively.
Under misrouting rate $\bar{\rho}$, the expected adaptive loss can be written as
\begin{align}
L_{\mathrm{adapt}}(\bar{\rho})
&=
(1-\bar{\rho})\big[L(\mathcal{D}_a,m_a)+L(\mathcal{D}_c,m_c)\big]
\nonumber\\
&\quad
+
\bar{\rho}\big[L(\mathcal{D}_a,m_c)+L(\mathcal{D}_c,m_a)\big].
\label{eq:uncertainty-Ladapt}
\end{align}
Define the ideal (oracle-routed) adaptive loss
\[
L_{\mathrm{adapt}}^{\mathrm{ideal}}
:=
L(\mathcal{D}_a,m_a)+L(\mathcal{D}_c,m_c),
\]
and the routing-induced excess loss
\[
\Delta
:=
\big[L(\mathcal{D}_a,m_c)+L(\mathcal{D}_c,m_a)\big]
-
\big[L(\mathcal{D}_a,m_a)+L(\mathcal{D}_c,m_c)\big].
\]
Then Eq. \eqref{eq:uncertainty-Ladapt} can be rewritten as
\begin{equation}
\label{eq:uncertainty-Ladapt-delta}
L_{\mathrm{adapt}}(\bar{\rho})
=
L_{\mathrm{adapt}}^{\mathrm{ideal}}
+
\bar{\rho}\,\Delta.
\end{equation}
Since each specialist is optimized for its own region,
$L(\mathcal{D}_a,m_a)\le L(\mathcal{D}_a,m_c)$ and
$L(\mathcal{D}_c,m_c)\le L(\mathcal{D}_c,m_a)$, implying $\Delta\ge 0$.

\paragraph{Bounding misrouting probability by entropy.}
We now relate $\bar{\rho}$ to $\mathcal{H}$.

\begin{lemma}[Misrouting is tightly bounded by entropy]
\label{lem:misrouting-le-entropy-opt}
For $p\in[0,1]$, let $\rho(p):=\min\{p,1-p\}$ and define the binary entropy
$H(p):=-p\log p-(1-p)\log(1-p)$ (nats), with $0\log 0:=0$.
Then, for all $p\in[0,1]$,
\[
\rho(p)\;\le\;\frac{H(p)}{2\log 2}.
\]
Consequently, with $\rho(\mathbf{x})=\rho(p(\mathbf{x}))$ and
$\mathcal{H}=\mathbb{E}[H(p(\mathbf{x}))]$,
\[
\bar{\rho} \;\le\; \frac{\mathcal{H}}{2\log 2}.
\]
Moreover, the constant $1/(2\log 2)$ is optimal, and equality holds at $p=\tfrac{1}{2}$.
\end{lemma}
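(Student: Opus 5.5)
By the symmetry $p\mapsto 1-p$, under which both $\rho$ and $H$ are invariant, it suffices to treat $p\in[0,\tfrac12]$, where $\rho(p)=p$. So we need
\[
2\log 2\cdot p\;\le\;H(p)\qquad\text{for } p\in[0,\tfrac12].
\]
The plan is to prove this with a concavity (chord) argument and then average over $\mathbf{x}$.

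The binary entropy $H$ is concave on $[0,1]$, since $H''(p)=-\frac{1}{p(1-p)}<0$ on $(0,1)$, and it is continuous at the endpoints under the convention $0\log 0=0$. Its values at the endpoints of $[0,\tfrac12]$ are $H(0)=0$ and $H(\tfrac12)=\log 2$. A concave function lies above its chord, so for $p\in[0,\tfrac12]$ we write $p=(1-t)\cdot 0+t\cdot\tfrac12$ with $t=2p\in[0,1]$ and obtain
\[
H(p)\;\ge\;(1-t)H(0)+tH(\tfrac12)\;=\;2p\log 2.
\]
This is exactly the pointwise claim $\rho(p)\le H(p)/(2\log 2)$. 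Equality holds at $p=\tfrac12$, where both sides equal $\tfrac12$.

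The constant is optimal because the bound is attained at $p=\tfrac12$. Any constant $c<1/(2\log 2)$ would give $\rho(\tfrac12)=\tfrac12>c\log 2$, so the inequality would fail there.

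For the averaged statement, apply the pointwise inequality with $p=p(\mathbf{x})$. Taking expectations over $\mathbf{x}\sim\mathcal{D}$ and using linearity gives
\[
\bar\rho=\mathbb{E}[\rho(p(\mathbf{x}))]\;\le\;\frac{\mathbb{E}[H(p(\mathbf{x}))]}{2\log 2}\;=\;\frac{\mathcal{H}}{2\log 2}.
\]

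No step is a real obstacle. The only care needed is at the endpoints $p\in\{0,1\}$: there we rely on the $0\log 0=0$ convention, which makes $H$ continuous on $[0,1]$ and therefore concave on the closed interval, so the chord inequality holds at the boundary as well.
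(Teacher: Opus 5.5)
Your proposal is correct and takes essentially the same route as the paper. The paper reduces to $p\in[0,\tfrac12]$ by symmetry, shows that $\phi(p):=H(p)-2(\log 2)p$ is concave with $\phi(0)=\phi(\tfrac12)=0$ (which is your chord inequality), and proves optimality of the constant by evaluating at $p=\tfrac12$; your averaging over $\mathbf{x}$ and your remark on continuity at the endpoints via $0\log 0=0$ simply spell out steps the paper leaves implicit.
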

\begin{proof}
By symmetry about $p=\tfrac{1}{2}$, it suffices to consider $p\in[0,\tfrac{1}{2}]$, where $\rho(p)=p$.
Define $\phi(p):=H(p)-2(\log 2)\,p$.
We have $\phi(0)=0$ and $H(\tfrac{1}{2})=\log 2$, hence $\phi(\tfrac{1}{2})=0$.
A direct calculation yields
\[
\phi''(p)=H''(p)=-\frac{1}{p(1-p)}<0 \quad \text{for } p\in(0,1),
\]
so $\phi$ is strictly concave. Since $\phi$ is concave and equals $0$ at the endpoints of $[0,\tfrac{1}{2}]$,
it satisfies $\phi(p)\ge 0$ on this interval. Therefore $H(p)\ge 2(\log 2)p$ for $p\in[0,\tfrac{1}{2}]$, i.e.,
$p\le H(p)/(2\log 2)$. By symmetry (replace $p$ with $1-p$), the same holds on $[\tfrac{1}{2},1]$.

For optimality, suppose $\rho(p)\le c\,H(p)$ holds for all $p$. Evaluating at $p=\tfrac{1}{2}$ gives
$\tfrac{1}{2}\le c\,\log 2$, so $c\ge 1/(2\log 2)$. Equality holds at $p=\tfrac{1}{2}$.
\end{proof}

\paragraph{Additive degradation bound.}
Let $L_{\mathrm{single}}^*:=\min_{\theta} g_p(\theta)$ denote the best single-model loss, and define
\[
\Gamma_{\mathrm{team}}(\mathcal{H})
:=
L_{\mathrm{single}}^* - L_{\mathrm{adapt}}(\bar{\rho}),
\]
with $\Gamma_{\mathrm{team}}(0):=
L_{\mathrm{single}}^* - L_{\mathrm{adapt}}^{\mathrm{ideal}}$.
From Eq. \eqref{eq:uncertainty-Ladapt-delta},
\begin{equation}
\label{eq:uncertainty-gamma-exact}
\Gamma_{\mathrm{team}}(\mathcal{H})
=
\Gamma_{\mathrm{team}}(0) - \bar{\rho}\,\Delta.
\end{equation}
To control $\Delta$, we use boundedness of the per-instance loss. For $0$--$1$ loss, $\ell\in[0,1]$ holds by definition.
For logistic loss, under bounded logits (e.g., bounded features and a bounded parameter region as assumed in the curvature subsection),
$\ell(z)=\log(1+e^{-z})$ is uniformly bounded, and one may rescale by a constant so that $\ell\in[0,1]$ without changing the optimizer.
Accordingly, assume $0\le \ell \le 1$, implying each region-average loss lies in $[0,1]$ and hence
\begin{equation}
\label{eq:uncertainty-delta-bound}
0 \le \Delta \le 1.
\end{equation}
Combining Eq. \eqref{eq:uncertainty-gamma-exact} and Eq. \eqref{eq:uncertainty-delta-bound} yields
\begin{equation}
\label{eq:uncertainty-additive}
\Gamma_{\mathrm{team}}(\mathcal{H})
\;\ge\;
\Gamma_{\mathrm{team}}(0) - \bar{\rho}.
\end{equation}
Finally, Lemma~\ref{lem:misrouting-le-entropy-opt} gives $\bar{\rho}\le \mathcal{H}/(2\log 2)$, and
Theorem~\ref{thm:adaptive-gap-strong-convexity} gives
$\Gamma_{\mathrm{team}}(0)\ge \frac{\kappa\mu}{2}p(1-p)D^2$.
Substituting into Eq. \eqref{eq:uncertainty-additive} yields the explicit additive lower bound:
\[
\Gamma_{\mathrm{team}}(\mathcal{H})
\;\ge\;
\frac{\kappa\mu}{2}p(1-p)D^2 \;-\; \frac{\mathcal{H}}{2\log 2}.
\]

\paragraph{Recovering the multiplicative form.}
The main text states a multiplicative degradation factor. This follows from Eq. \eqref{eq:uncertainty-gamma-exact} if the misrouting penalty is not too large relative to the ideal gain, namely if
\begin{equation}
\label{eq:uncertainty-rel-penalty}
\Delta \le \Gamma_{\mathrm{team}}(0).
\end{equation}
Under Eq. \eqref{eq:uncertainty-rel-penalty}, we have
\begin{align*}
\Gamma_{\mathrm{team}}(\mathcal{H})
&=
\Gamma_{\mathrm{team}}(0) - \bar{\rho}\,\Delta\\   
&\ge
\Gamma_{\mathrm{team}}(0) - \bar{\rho}\,\Gamma_{\mathrm{team}}(0)\\
&=
(1-\bar{\rho})\,\Gamma_{\mathrm{team}}(0).
\end{align*}
Combining with $\bar{\rho}\le \mathcal{H}/(2\log 2)$ and
$\Gamma_{\mathrm{team}}(0)\ge \frac{\kappa\mu}{2}p(1-p)D^2$ yields
\[
\Gamma_{\mathrm{team}}(\mathcal{H})
\;\ge\;
\left(1-\frac{\mathcal{H}}{2\log 2}\right)\frac{\kappa\mu}{2}\,p(1-p)\,D^2
\]
\hfill$\qed$

\section{Experiments}
All experiments are implemented in Python using TensorFlow and scikit-learn with default hyperparameters unless otherwise noted. Training and evaluation are conducted on systems equipped with NVIDIA A10 GPUs.

\begin{figure}
    \centering
    \includegraphics[width=\linewidth]{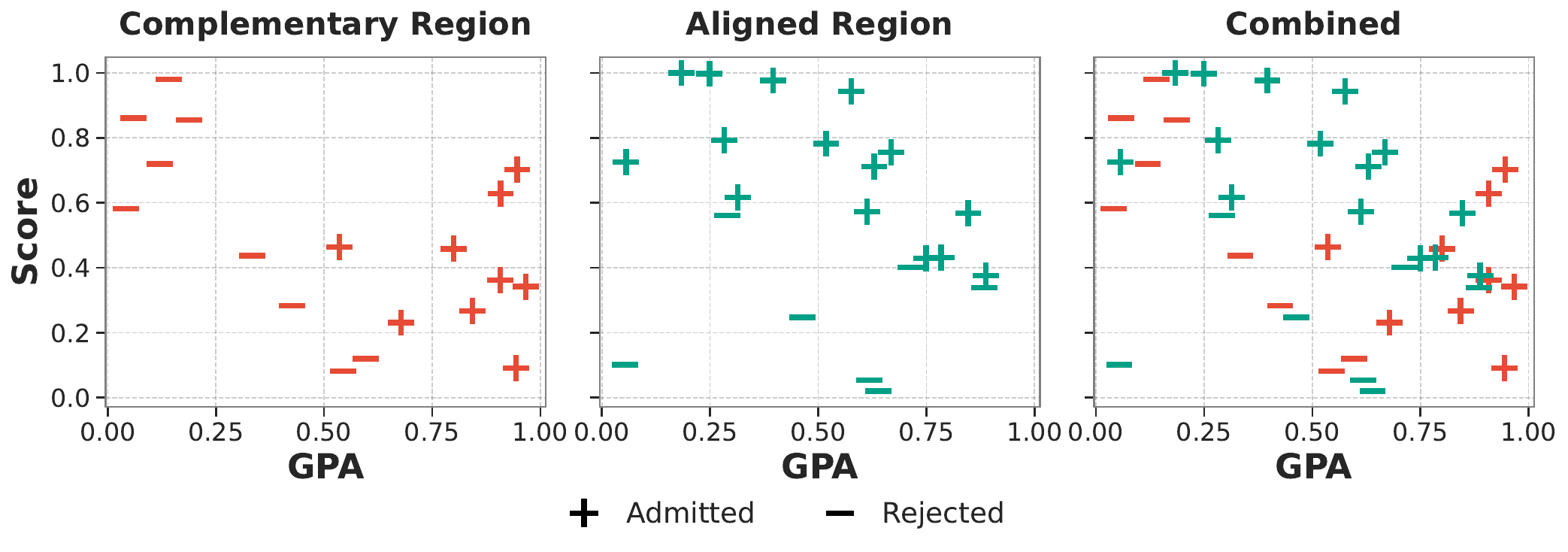}
    \caption{Samples from \textsf{College Admission} dataset with $p=0.5$ and $\delta = 0.25$.}
    \label{fig:ca_data}
\end{figure}

\begin{figure}
    \centering
    \includegraphics[width=\linewidth]{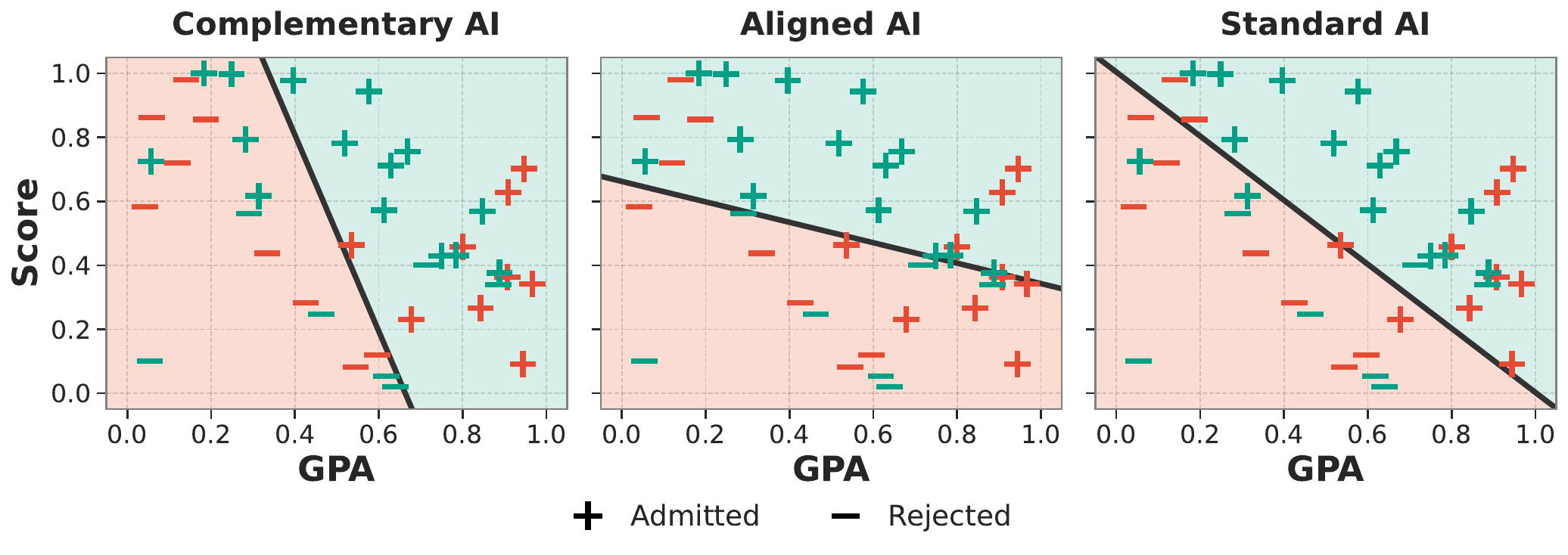}
    \caption{Visualization of decision boundary of standard and specialist logistic regression models on \textsf{College Admission} dataset with $p=0.5$ and $\delta = 0.25$.}
    \label{fig:ca_decision_boundary}
\end{figure}
\subsection{\textsf{College Admission}}
In this dataset, we mimic the college admission scenario, where decision makers need to determine whether to admit an applicant to college (i.e., $\mathcal{Y}=\{+1, -1\}$, $+1$ represents admitted while $-1$ represents rejected), given two features of the applicant---their Grade Point Average (i.e., ``\textsc{Gpa}'') and their standardized test scores (i.e., ``\textsc{Score}'').
Applicants belong to either a privileged (aligned) or underprivileged (complementary) group, reflecting regions where different signals are more informative and where human confidence may vary.
The dataset is constructed so that \textsc{Score} is more predictive of the admission outcome for privileged applicants, while \textsc{Gpa} is more predictive for underprivileged applicants. Privileged applicants, with access to better preparation and the ability to retake tests, are more likely to have a representative \textsc{Score}, making it a reliable basis for their admission decision. Underprivileged applicants, in contrast, better demonstrate their abilities through school-specific \textsc{Gpa}.

For each of the $n$ applicants, the values of $x_{\text{GPA}}$ and $x_{\text{Score}}$ are sampled uniformly from $[0,1]$. Each applicant is assigned to the privileged group with probability~$p$. The ground truth label is then generated based on group membership:
in the alignment region, $y = \mathbb{I}[(0.5+\delta)\,x_\text{Score} + (0.5-\delta)\,x_\text{GPA} \ge 0.5]$; in the complementarity region, $y = \mathbb{I}[(0.5+\delta)\,x_\text{GPA} + (0.5-\delta)\,x_\text{Score} \ge 0.5]$.
The parameter $\delta \in [0,0.5]$ controls the separation between the two regions' optimal decision boundaries, while $p$ governs how closely a single model approximates one of the two specialists. Our defaults are $p=0.5$ (balanced groups) and $\delta=0.25$ (moderate feature dominance per group). Figures~\ref{fig:ca_data} and~\ref{fig:ca_decision_boundary} illustrate this setting.

\paragraph{Generalization across model families.}
\begin{figure}
    \centering
    \includegraphics[width=0.85\linewidth]{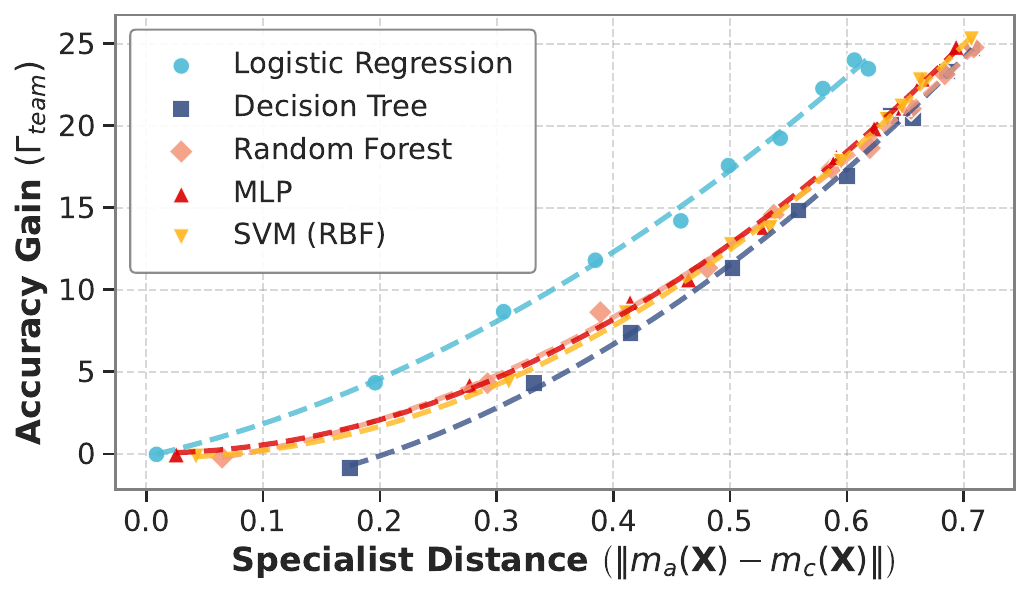}
    \caption{Adaptive AI gains extend beyond logistic regression and grow quadratically with specialist divergence across varied models.}
    \label{fig:relative_tradeoff_extended}
\end{figure}

We verify that the benefits of adaptive AI are not confined to the linear setting assumed in our theoretical analysis.
Using the same setting as Figure~\ref{fig:ca_gains_by_param}a in the main text, we evaluate the accuracy gap between adaptive AI and standard AI for four additional model families: Decision Tree, Random Forest, Multilayer Perceptron, and Support Vector Machine (Radial Basis Function kernel). We continue to vary $\delta$ to induce different levels of specialist divergence, but measure divergence via the change in model predictions, since the notion of model ``parameters'' is not straightforward for all architectures.
Figure~\ref{fig:relative_tradeoff_extended} shows that, across all five model families, accuracy gains grow quadratically with specialist divergence, consistent with the theoretical insights derived for the logistic regression setting.

\paragraph{Robustness to multi-objective model weight selection.}
Theorem~\ref{thm:adaptive-gap-strong-convexity} defines the optimal single-model baseline as
$L^*_{\text{single}}=\min_{\theta}\, g_p(\theta)$, i.e., the population-consistent
weighted surrogate with $w=p:=\mathbb{P}[x\in\mathcal{D}_a]$.
A natural concern is whether the adaptive AI advantage is an artifact of this particular weighting. To address this, we broaden the single-model comparison class by training a family of \emph{fixed-weight} models
$\theta_w^* \in \arg\min_{\theta}\, g_w(\theta)$ for
$w\in\{0,\, 0.25,\, 0.5,\, 0.75,\, 1\}$, where $w=0$ and $w=1$ correspond to the
complementary and aligned specialists, respectively, and $w=p$ recovers the
standard AI baseline.

Figure~\ref{fig:ca_weight_sweep} (top) shows accuracy gains over the standard model ($w{=}p$) across varying aligned group fractions~$p$. The largest single-model accuracy gains occur at the endpoints $w \in \{0, 1\}$, corresponding to pure specialists. This may appear counterintuitive: when $p = 0.75$, one might expect a model trained with $w = 0.75$ to outperform a specialist with $w = 1$. However, because the two groups' optimal boundaries are sufficiently distinct, soft interpolation via intermediate weights produces boundaries that are suboptimal for \emph{both} groups. Hard commitment to one group's boundary is more effective for that group, and adaptive AI exploits this by routing each instance to the appropriate specialist.

Figure~\ref{fig:ca_weight_sweep} (bottom) confirms that adaptive AI also achieves lower log-loss (i.e., the actual training objective) compared to every single model. Under log-loss, deviating from the population-consistent weight $w{=}p$ does not improve the single-model objective on the mixture distribution, reaffirming the baseline choice in Theorem~\ref{thm:adaptive-gap-strong-convexity}. Notably, the benefits of intermediate weights are restored under this metric: for example, at $p=0.25$, single AI with $w{=}0.25$ outperforms the complementary specialist ($w{=}0$). Taken together, these results demonstrate that the adaptive AI advantage persists regardless of the single-model weighting and cannot be recovered by tuning a fixed-weight surrogate.

\begin{figure}
    \centering
    \begin{subfigure}
        \centering
        \includegraphics[width=0.86\linewidth]{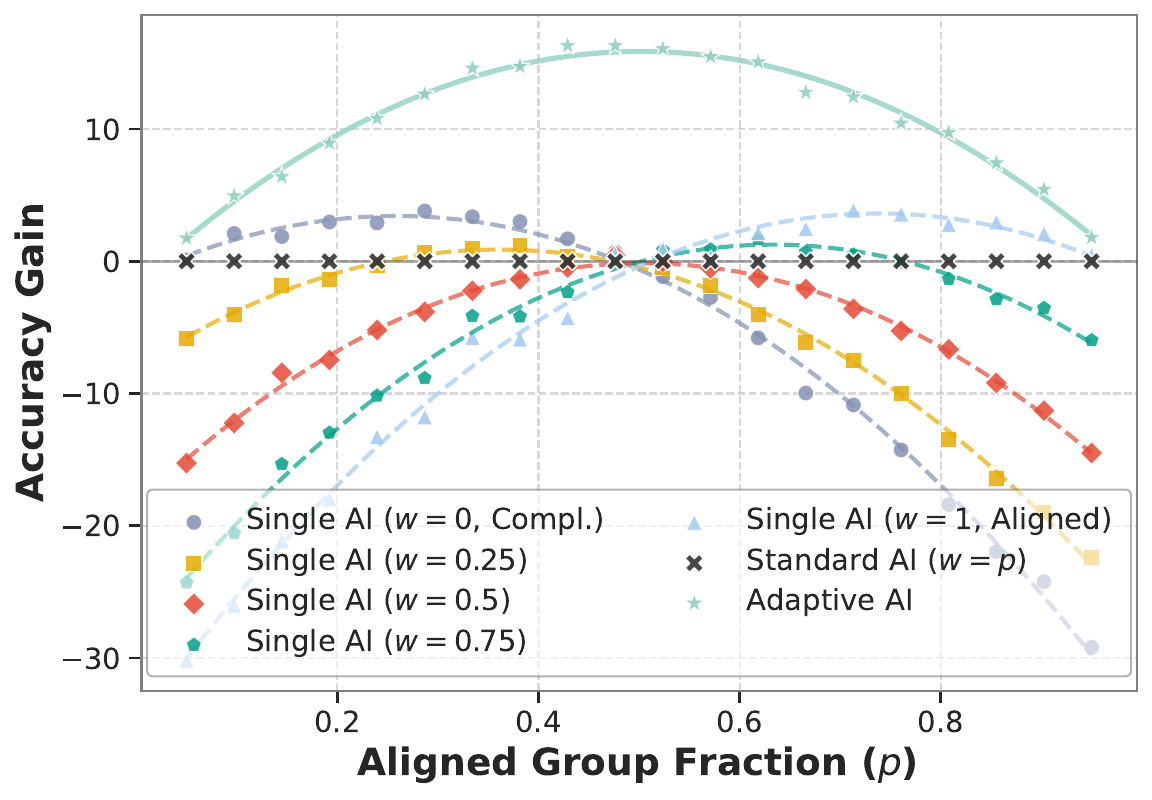}
        \label{fig:ca_weight_sweep_acc}
    \end{subfigure}
    \hfill
    \begin{subfigure}
        \centering
        \includegraphics[width=0.86\linewidth]{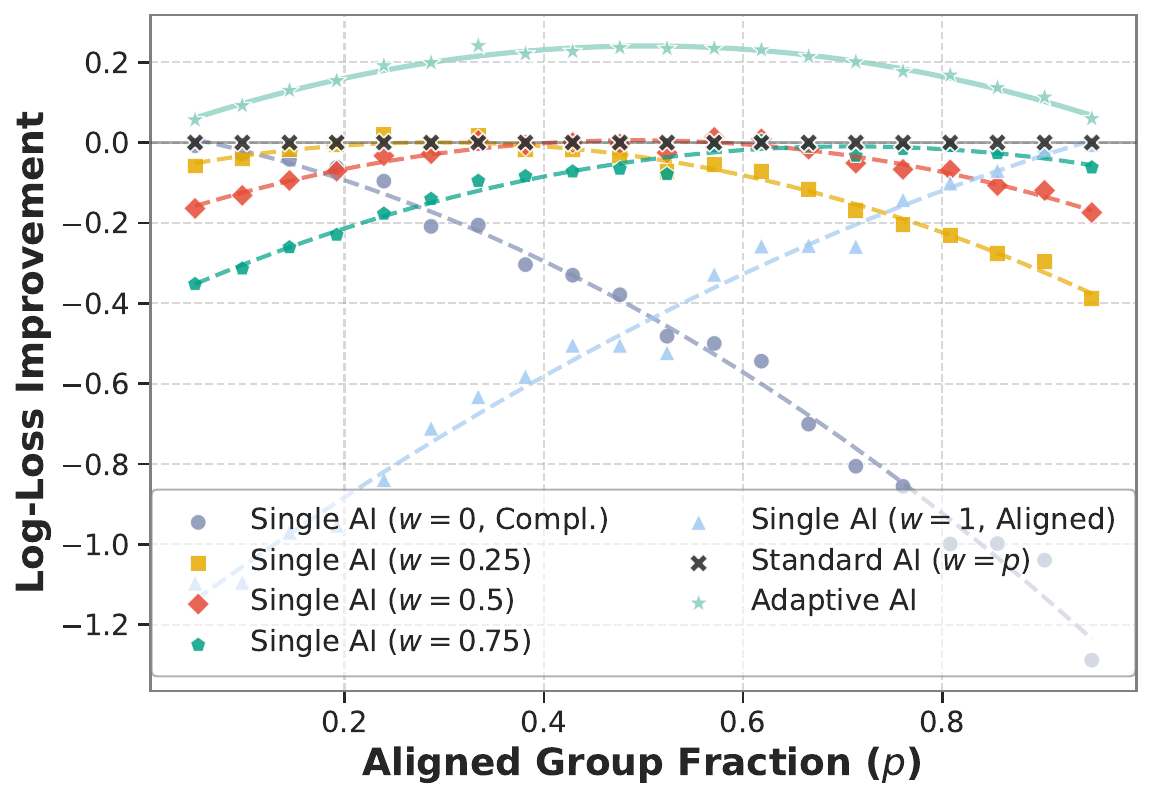}
        \label{fig:ca_weight_sweep_logloss}
    \end{subfigure}
    \caption{
    Adaptive AI gains persist across the full family of fixed-weight single-model surrogates.
    \textbf{Top:} accuracy gains over standard AI. Pure specialists ($w{\in}\{0,1\}$) can outperform intermediate weights, yet adaptive AI consistently dominates.
    \textbf{Bottom:} log-loss improvement (positive values indicate lower loss). Fixed-weight deviations do not improve upon the
    population-consistent objective, confirming $w=p$ as the appropriate single-model baseline; adaptive AI achieves further improvement by routing to specialists.}
    \label{fig:ca_weight_sweep}
\end{figure}

\subsection{\textsf{WoofNette}}

The \textsf{WoofNette} dataset \cite{mahmood2024designing} comprises images of five easily recognizable objects and five challenging dog breeds.
Sample training images are shown in Figure \ref{fig:woofnette_data}.

\begin{figure*}
    \includegraphics[width=0.81\linewidth]{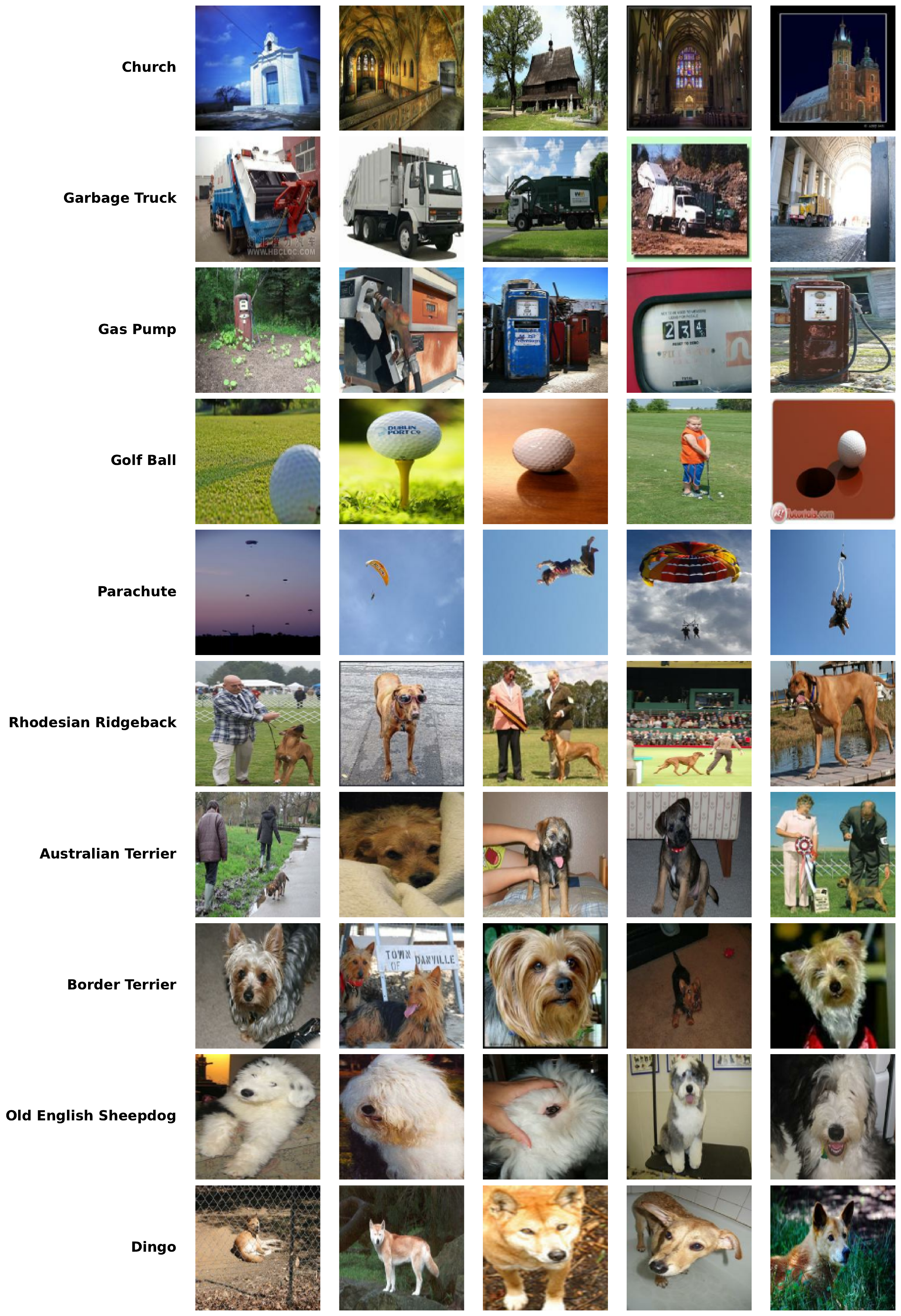}
    \caption{Sample images from the \textsf{WoofNette} dataset.}
    \label{fig:woofnette_data}
\end{figure*}

\end{document}